\theoremstyle{plain}
\newtheorem{definition}{Definition}[section]
\newtheorem{theorem}{Theorem}
\newtheorem{corollary}{Corollary}[theorem]
\newcommand{\pjn}{{INGNN}}
\newcommand{\mypar}[1]{{\vspace{0.05cm} \noindent \bf #1}}
\newcommand{\bigO}{\mathcal{O}}
\newcommand{\nnz}{\text{nnz}}
\icmltitlerunning{INGNN: Uplifting Message Passing Neural Network with Graph Original Information}
\begin{document}

\twocolumn[
\icmltitle{Uplifting Message Passing Neural Network with Graph Original Information}



\icmlsetsymbol{equal}{*}

\begin{icmlauthorlist}
\icmlauthor{Xiao Liu}{umass}
\icmlauthor{Lijun Zhang}{umass}
\icmlauthor{Hui Guan}{umass}
\end{icmlauthorlist}

\icmlaffiliation{umass}{Department of Computer Science, University of Massachusetts Amherst}

\icmlcorrespondingauthor{Xiao liu}{xiaoliu1990@cs.umass.edu}

\icmlkeywords{Graph Neural Network, Machine Learning}

\vskip 0.3in
]



\printAffiliationsAndNotice{}  

\begin{abstract}
Message passing neural networks (MPNNs) learn the representation of graph-structured data based on graph original information, including node features and graph structures, and have shown astonishing improvement in node classification tasks.
However, the expressive power of MPNNs is upper bounded by the first-order Weisfeiler-Leman test and its accuracy still has room for improvement. 
This work studies how to improve MPNNs' expressiveness and generalizability by fully exploiting graph original information both theoretically and empirically. 
It further proposes a new GNN model called \pjn{} (INformation-enhanced Graph Neural Network) that leverages the insights to improve  node classification performance.  
Extensive experiments on both synthetic and real datasets demonstrate the superiority (average rank 1.78) of our \pjn{} compared with state-of-the-art methods.

\end{abstract}

\section{Introduction}\label{sect:intro}



Graph has been widely used to model structured data such as proteins, social networks, and traffic networks. 
Such graph-structured data usually consists of two types of raw information, \textit{graph structure} that describes connectivity between nodes, and \textit{node features} that describes the attributes of the nodes. We refer to the graph structure and the node features collectively as \textit{graph original information}.
One of the most popular machine learning tasks on graph-structured data is node classification, whose goal is to predict the label, which could be a type or category, associated with all the nodes when we are only given the true labels on a subset of nodes~\cite{hamilton2020graph}.

Graph Neural Networks (GNNs) have been proven to be a powerful approach to learning graph representations for node classification tasks~\cite{hamilton2017inductive,klicpera2018predict,wu2019simplifying}.
The most common type of GNNs adopts a message passing paradigm that recursively propagates and aggregates \textit{node features} through the \textit{graph structure} to produce node representations~\cite{gilmer2017neural,battaglia2018relational}. 
The message passing paradigm, fully relying on the \textit{graph original information} and showing premise of automatic learning, produces much better task performance than traditional graph machine learning that requires deliberate feature engineering.

However, unlike multi-layer feedforward networks (MLPs) which could be universal approximators for any continuous function \cite{hornik1989multilayer}, message passing neural networks (MPNNs) cannot approximate all graph functions \cite{maron2018invariant}. 
MPNNs' expressive power is upper bounded by the first-order Weisfeiler-Leman (1-WL) isomorphism test \cite{xu2018powerful}, indicating that they cannot distinguish two non-isomorphic graph structures if the 1-WL test fails. 
This limitation of MPNNs implies open opportunities to further improve MPNN's performance.  

Researchers have explored several directions in increasing the expressive power of MPNNs. 
As $k$-WL is strictly more expressive than 1-WL, many works tried to mimic the high-order WL tests such as 1-2-3 GNN \cite{morris2019weisfeiler}, PPGN \cite{maron2019provably}, ring-GNN \cite{chen2019equivalence}.
However, they required $O(k)$-order tensors to achieve $k$-WL expressiveness, and thus cannot be generalized to large-scale graphs. 
In order to maintain linear scalability w.r.t. the number of nodes, recent work focused on developing more powerful MPNNs by adding \textit{handcrafted features}, such as subgraph counts \cite{bouritsas2022improving}, distance encoding \cite{li2020distance}, and random features \cite{abboud2020surprising,sato2021random}, to make nodes more distinguishable. 
Although these works achieve good results in many cases, their handcrafted features could introduce inductive bias that hurts generalizability. 
More importantly, they lose the premise of automatic learning~\cite{zhao2021stars}, 
an indispensable property of MPNNs that makes MPNNs more appealing than traditional graph machine learning. 
This trend raises a fundamental question: \textit{is it possible to improve MPNNs' expressiveness, without sacrificing the linear scalability of MPNNs w.r.t. the number of nodes and also their automatic learning property by getting rid of handcrafted features?}


In this work, we confirm the answer to the question and demonstrate that, simply exploiting the \textit{graph original information} rather than deliberately handcrafting features can improve the performance of MPNNs while preserving their linear scalability. The two types of graph original information, graph structures and node features, can be used both together and separately, leading to three features including \textit{ego-node features}, \textit{graph structure features}, and \textit{aggregated neighborhood features}. 
MPNNs, however, leverage only the last feature for downstream tasks. 
We find that all three features are necessary to improve the expressive power or the generalizability of MPNNs and prove it both theoretically and empirically. 
Based on the insight, we further propose an INformation-enhanced MPNN (\pjn{}), which adaptively fuses these three features to derive nodes' representation and achieves outstanding accuracy compared to state-of-the-art baselines.  
We summarize our main contributions as follows:


\setlist{nolistsep}
\begin{itemize}
    \item \textbf{Theoretical Findings.} 
    We show theoretically that the expressiveness of an MPNN that integrates graph structure features is strictly better than $1\&2$-WL and no less powerful than $3$-WL.
    We also prove that the node misclassification rate for graphs in some cases can be depressed by utilizing ego-node features separately from MPNNs.
    We empirically verify the necessity of the three features and show that the node classification accuracy of start-of-the-art models, GCN \cite{kipf2016semi}, H2GCN \cite{zhu2020beyond}, and LINKX \cite{lim2021large} can be improved by adding some of the three identified features that they do not have originally.

    \item \textbf{\pjn{}.} We propose \pjn{}, a powerful GNN which uplifts MPNNs' expressiveness and accuracy by integrating graph structure features and ego-node features. The features are fused automatically with adaptive feature fusion under a bi-level optimization framework.
    Our ablation study demonstrates that all three features, the fusing mechanism, and the optimization procedure play indispensable roles on different types of graphs.
    
    \item \textbf{Evaluation.} 
    We conduct extensive experiments to compare \pjn{} with state-of-the-art GNNs using synthetic and real graph benchmarks that cover the full homophily spectrum. \pjn{} outperforms 12 baseline models and achieves an average rank of 1.78 on 9 real datasets. \pjn{} achieves higher node classification accuracy, $4.17\%$ higher than GCN~\cite{kipf2016semi}, $4.01\%$ higher than MIXHOP \cite{abu2019mixhop}, and $3.43\%$ higher than GPR-GNN~\cite{chien2020adaptive}, on average over the real datasets.
    
\end{itemize}

\section{Related Works}
\label{sec:Related}
\textbf{Regular MPNNs.} Many GNNs \cite{niepert2016learning,hamilton2017inductive,monti2017geometric,velivckovic2017graph,gao2018large,xu2018powerful,wang2019dynamic} fall into the message passing framework \cite{gilmer2017neural,battaglia2018relational}, which iteratively transforms and propagate the messages from the spatial neighborhoods through the graph topology to update the embedding of a target node.
To name a few, GCN \cite{kipf2016semi} designs a layer-wise propagation rule based on a first-order approximation of spectral convolutions on graphs. 
GraphSAGE \cite{hamilton2017inductive} extends GCN by introducing a recursive node-wise sampling scheme to improve the scalability. 
Graph attention networks (GAT) \cite{velivckovic2017graph}  enhances GCN with the attention mechanism \cite{vaswani2017attention}.

\textbf{Improve Expressiveness.}
Several works attempt to improve the expressiveness of GNNs.
The first line of research mimics the high-order WL tests such as 1-2-3 GNN \cite{morris2019weisfeiler}, PPGN \cite{maron2019provably}, ring-GNN \cite{chen2019equivalence}. They usually require exponentially increasing space and time complexity w.r.t. the number of nodes in a graph and thus cannot be generalized to large-scale graphs. 
The second line of research maintains the linear scalability of more powerful GNNs by adding features, such as subgraph counts \cite{bouritsas2022improving}, distance encoding \cite{li2020distance}, and random features \cite{abboud2020surprising,sato2021random}. 
The handcrafted features may introduce inductive bias that hurts generalization and also lose the premise of automatic learning \cite{zhao2021stars}.
The third line of work tries to design more expressive GNNs by involving high-order neighbors.
For example, H2GCN \cite{zhu2020beyond} identifies ego- and neighbor-embedding separation, second-order neighbors, and the combination of intermediate representations to boost representation learning.
MixHop \cite{abu2019mixhop} proposes a graph convolutional layer that utilizes multiple powers of the adjacency matrix to learn general mixed neighborhood information.
Similarly, Generalized PageRank (GPR-) GNN \cite{chien2020adaptive} adaptively controls the contribution of different propagation steps.
In this paper, we propose to improve MPNNs' expressiveness by integrating the original graph structure features so that we can retain the linear complexity and the automatic learning property of MPNNs.

\textbf{Improve Generalizability.}
Recent works start to pay attention to improving the GNNs' generalizability, including overcoming over-smoothing and handling graphs with various homophily.
DAGNN \cite{liu2020towards}  proposes to decouple the transformation and the propagation operation to increase receptive fields.
APPNP \cite{klicpera2018predict} leverages personalized PageRank to improve the propagation scheme.
Later, BernNet \cite{he2021bernnet} learns an arbitrary spectral filter such as the ones in GCN, DAGNN, and APPNP via Bernstein polynomial approximation.
Similarly, AdaGNN \cite{dong2021adagnn} introduces trainable filters to capture the varying importance of different frequency components.
Considering graph heterophily, ACM-GCN \cite{luan2022revisiting} proposes a multi-channel mixing mechanism, enabling adaptive filtering at nodes with different homophily.
And GloGNN \cite{li2022finding} handles low-homophilic graphs with global homophily.
To improve MPNN's generalizability, we propose to enhance MPNNs with ego-node features, and show that it can depress the node misclassification rate in some graph cases.


\section{Improve MPNNs and Theory} \label{sec:theory}
This section first introduces the notations and then describes how we can exploit the graph original information to improve the expressiveness and the generalizability of MPNNs. 

\textbf{Notations.}
Let $\mathcal{G}=(\mathcal{V},\mathcal{E},\mathbf{X})$ denotes a graph with $N$ nodes and $M$ edges, where $\mathcal{V}$ and $\mathcal{E}$ are the set of nodes and edges respectively, $|\mathcal{V}|=N$ and $|\mathcal{E}|=M$. 
We use $\mathbf{A} \in \{0,1\}^{N \times N}$ as the adjacency matrix where $\mathbf{A}[i,j]=1$ if $(v_i,v_j) \in \mathcal{E}$ otherwise $\mathbf{A}[i,j]=0$. 
Each node $v_i \in \mathcal{V}$ has a raw feature vector $x_i$ of size $D$. 
The raw feature vectors of all nodes in the graph form a feature matrix $\mathbf{X} \in \mathbb{R}^{N \times D}$. We refer to the adjacency matrix $\mathbf{A}$ and the feature matrix $\mathbf{X}$ together as \textit{graph original information}.

We attempt to improve MPNNs by fully exploiting graph original information. 
Specifically, given a graph $\mathcal{G}$, we investigate the three features $f_{ego}(\mathbf{X})$, $f_{strc}(\mathbf{A})$, and $f_{aggr}(\mathbf{X},\mathbf{A})$, which result from using the node features $\mathbf{X}$ and the adjacency matrix $\mathbf{A}$ together and separately. 
$f_{ego}(\cdot)$ and $f_{strc}(\cdot)$ can be MLPs for simulating any continuous function, and $f_{aggr}(\cdot)$ is an MPNN.
For the description purpose, we refer to the output of $f_{ego}(\mathbf{X})$ as \textit{ego-node features} since it only takes node feature itself as input, outputs of $f_{strc}(\mathbf{A})$ as \textit{graph structure features} since it only looks at the graph structure, and outputs of $f_{aggr}(\mathbf{X},\mathbf{A})$ as \textit{aggregated neighborhood features}. 

\subsection{Improve Expressiveness.}
We find that integrating the graph structure features to MPNNs, i.e., combining $f_{strc}(\mathbf{A})$ and $f_{aggr}(\mathbf{X},\mathbf{A})$, can improve the expressiveness of MPNNs.

\begin{figure}[t]
\centering
\subfigure[(4,4)-rook graph] 
{
\includegraphics[width=0.33\linewidth]{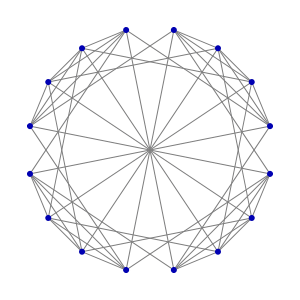}\label{fig:44rook}
\raisebox{0.25\height}{\includegraphics[width=0.12\linewidth]{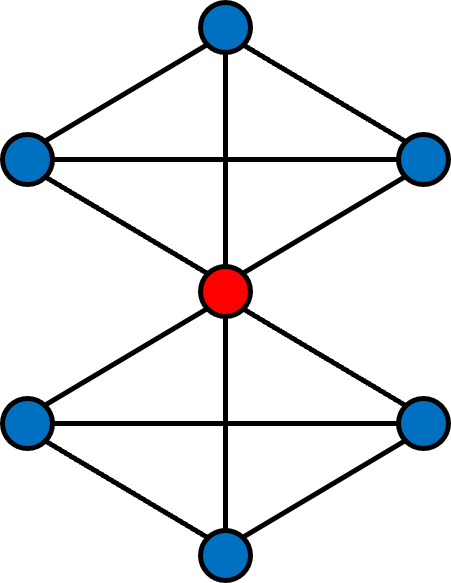}\label{fig:sub44rook}}
}
\subfigure[Shrikhande graph]
{
\includegraphics[width=0.33\linewidth]{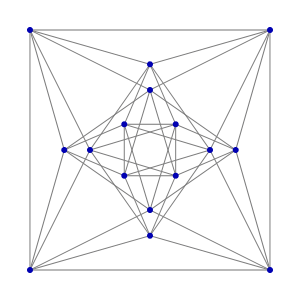}\label{fig:Shrikhande}
\raisebox{0.25\height}{\includegraphics[width=0.12\linewidth]{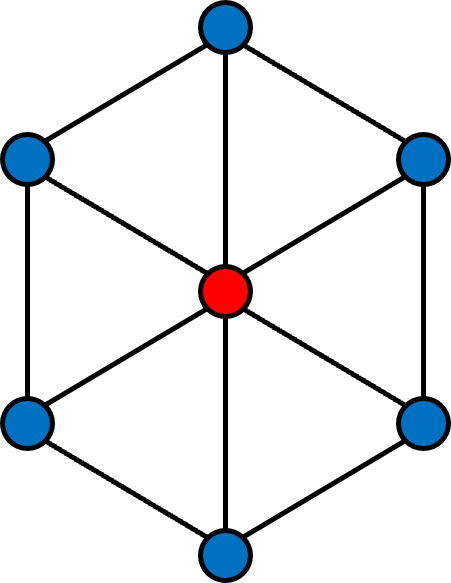}\label{fig:subShrikhande}
}}
\caption{Two non-isomorphic strongly regular graphs that cannot be distinguished by 3-WL. The right side is the neighborhood subgraph of an arbitrary node .}
\label{fig:srg}
\end{figure}



\begin{theorem} \label{thm:powerful}
The expressive power of the combination of the aggregated neighborhood features and the graph structure feature, i.e., $\mathcal{F}_{comb}(\mathbf{X},\mathbf{A})=[f_{aggr}(\mathbf{X},\mathbf{A}), f_{strc}(\mathbf{A})]$, where $f_{strc}$ is injective functions, and $f_{aggr}$ is an MPNN with a sufficient number of layers that are also injective functions, is strictly more powerful than 1\&2-WL, and no less powerful than 3-WL. 
\end{theorem}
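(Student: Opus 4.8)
The plan is to read ``expressive power'' in the standard sense, namely the ability to assign distinct representations to non-isomorphic graphs, and to treat the statement as two separate claims: that $\mathcal{F}_{comb}$ is at least as strong as the 1\&2-WL test with a witnessing pair making the gap strict, and that it additionally separates a pair that 3-WL cannot.

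First I would establish the lower bound, that $\mathcal{F}_{comb}$ is at least as powerful as 1\&2-WL. I would invoke two classical facts: the 1-WL and 2-WL tests have identical distinguishing power, so 1\&2-WL collapses to 1-WL; and an MPNN whose aggregation and update maps are injective matches 1-WL exactly (the GIN argument of \cite{xu2018powerful}). Since $\mathcal{F}_{comb}$ contains $f_{aggr}(\mathbf{X},\mathbf{A})$ as one coordinate of its concatenation, and concatenation never discards information, $\mathcal{F}_{comb}$ inherits every distinction that $f_{aggr}$, hence 1-WL, can make. This settles the ``no less powerful than 1\&2-WL'' half immediately.

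Next, for strictness and for the comparison against 3-WL, I would exploit the two non-isomorphic strongly regular graphs of \Cref{fig:srg}, the $(4,4)$-rook graph and the Shrikhande graph, both $\mathrm{SRG}(16,6,2,2)$. These receive identical colorings under 1-WL, 2-WL, and even 3-WL, so a single separating argument settles both remaining claims at once. The key observation is that the two graphs differ in their neighborhood subgraph: every vertex of the rook graph has a neighborhood inducing two disjoint triangles ($2K_3$), whereas every vertex of the Shrikhande graph has a neighborhood inducing a single hexagon ($C_6$). I would argue that the injective structure feature $f_{strc}(\mathbf{A})$, which exposes each node's adjacency pattern rather than only a propagated summary of $\mathbf{X}$, lets $\mathcal{F}_{comb}$ read off a local invariant, for instance whether a node lies in a $K_4$, equivalently whether its neighborhood contains a triangle, that separates the two graphs. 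Since neither 1\&2-WL nor 3-WL can tell this pair apart, the same separation yields both ``strictly more powerful than 1\&2-WL'' and that $\mathcal{F}_{comb}$ is not dominated by 3-WL.

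The main obstacle is this last step. Unlike the clean 1-WL lower bound, genuinely distinguishing the rook/Shrikhande pair requires care on two fronts. I must verify that the quantity read from $f_{strc}(\mathbf{A})$ is a permutation-respecting graph invariant rather than an artifact of node ordering, so that the graph-level representations differ between the two graphs yet agree under relabelings of either one; and I must confirm that a finite, injective composition of $f_{strc}$ with the aggregation layers actually realizes the neighborhood statistic that separates $2K_3$ from $C_6$. Pinning down precisely which structural statistic the concatenated features expose, and checking that 3-WL provably assigns it the same value on this pair, is where the real work lies; the remaining bookkeeping about injectivity and sufficient depth is routine given the GIN-style construction.
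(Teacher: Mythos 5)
Your proposal is correct and follows essentially the same route as the paper: a lower bound obtained because the injective MPNN coordinate $f_{aggr}$ already matches 1-WL (and 1-WL $\equiv$ 2-WL), plus a separation using exactly the same $(4,4)$-rook/Shrikhande pair of strongly regular $(16,6,2,2)$ graphs, which 3-WL cannot distinguish but which the injective $f_{strc}$ separates through their non-isomorphic vertex neighborhoods. The only differences are cosmetic: you name the distinguishing neighborhood invariant explicitly ($2K_3$ versus $C_6$) where the paper certifies non-isomorphism of the two neighborhood subgraphs by enumerating permutations, and the permutation-invariance subtlety you flag as ``the real work'' is left at the same level of informality in the paper's own proof.
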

\vspace{-10pt}
\begin{proof}[Proof of Theorem \ref{thm:powerful}] 
We first prove that $\mathcal{F}_{comb}$ is at least as powerful as 1-WL. 
In other words, if two graphs are identified by $\mathcal{F}_{comb}$ as isomorphic, then it is also identified as isomorphic by 1-WL. 
Given two graphs $\mathcal{G}_1$ and $\mathcal{G}_2$,  if they are identified as isomorphic by $\mathcal{F}_{comb}$, it means that $\mathcal{F}_{comb}$ maps them to the same representation. 
Then we know that every part of the two representations is the same as well, e.g., $f_{aggr}(\mathbf{X}_{\mathcal{G}_1},\mathbf{A}_{\mathcal{G}_1})=f_{aggr}(\mathbf{X}_{\mathcal{G}_2},\mathbf{A}_{\mathcal{G}_2})$, meaning that $f_{aggr}$ cannot distinguish the two graphs. Since $f_{aggr}$ is as powerful as 1-WL \cite{maron2019provably, azizian2020expressive}, $\mathcal{G}_1$ and $\mathcal{G}_2$ cannot be distinguished by 1-WL either.

We then prove that $\mathcal{F}_{comb}$ is no less powerful than 3-WL, which means that there exist some graphs that could be distinguished by $\mathcal{F}_{comb}$ but not by 3-WL \cite{chen2020can}. 
We prove it using the two graphs in Figure~\ref{fig:srg}, which are strongly regular graphs with the same graph parameters $(16,6,2,2)$ \footnote{Strongly regular graphs are regular graphs with $v$ nodes and degree $k$. And every two adjacent vertices have $\lambda$ common neighbors and every two non-adjacent vertices have $\mu$ common neighbors. The tuple ($v, k, \lambda, \mu$ ) is the parameters of a strongly regular graph.}. These two graphs cannot be distinguished by 3-WL \footnote{Any strongly regular graphs with the same parameters cannot be distinguished by 3-WL \cite{arvind2020weisfeiler}.}. 
The figure also shows an example neighborhood subgraph around an arbitrary node marked as red. 

Now we prove that $\mathcal{F}_{comb}$ can successfully distinguish the graphs in Figure \ref{fig:44rook} and \ref{fig:Shrikhande}, due to the use of graph structure features.
Notice that all 1-hop subgraphs for each node in Figure \ref{fig:44rook} (also in Figure \ref{fig:Shrikhande}) are the same. It implies that $\mathcal{F}_{comb}$ can distinguish them as long as it can distinguish one subgraph pair as shown on the right side of Figure \ref{fig:44rook} and \ref{fig:Shrikhande}.
We can easily prove that $\mathcal{F}_{comb}$ can distinguish the two subgraphs, namely they are non-isomorphic, since their adjacency matrices are non-identical under any neighborhood permutation.
A simple algorithm for proving it is to enumerate all the neighborhood permutations, then check if there is one pair of adjacency matrices that are the same under some permutation. In this case, the enumeration algorithm will return False. 
Given that the adjacency matrices of the two subgraphs are different with any possible node permutation, $\mathcal{F}_{comb}$ will identify the two subgraphs as non-isomorphic because $f_{strc}$ is an injective function and will map the adjacency matrices of the two subgraphs into different representations. 
Therefore, Figure \ref{fig:44rook} and \ref{fig:Shrikhande} can be distinguished as well, since all subgraph pairs are the same and can be distinguished.
Hence, we have proven that $\mathcal{F}_{comb}$ is no less powerful than 3-WL.


As 3-WL is strictly more powerful than 1-WL \cite{sato2020survey} and 1-WL and 2-WL have equivalent expressiveness \cite{maron2019provably}, we have shown that $\mathcal{F}_{comb}$ is strictly more powerful than 1\&2-WL and no less powerful than 3-WL.
\end{proof}

\subsection{Improve Generalizability.}
We also find that integrating ego-node features into MPNNs, i.e., combining $f_{ego}(\mathbf{X})$ and $f_{aggr}(\mathbf{X},\mathbf{A})$, can improve the generalization ability of MPNNs. Specifically, we will prove that the node features after aggregation might be harder to classify under some graph homophily settings, and that using ego-node features can depress the node misclassification rate in these cases. Since the proof is related to graph homophily, we first formally define Graph Homophily $\mathcal{H}$ that measures the overall similarity between the nodes connected by an edge in terms of the labels.

\begin{definition}[Graph Homophily] \label{def:Homophily}
Given a graph $\mathcal{G}$ with node labels $\mathcal{Y}$, the edge homophily is defined as
$\mathcal{H}(\mathcal{G},\mathcal{Y})=\frac{1}{|\mathcal{E}|}\sum_{(u,v) \in \mathcal{E}} \mathrm{}{1}(y_u=y_v)$, which represents the fraction of the edges that connect two nodes with the same class label.
\end{definition}

\begin{theorem} \label{thm:misclassification}
There exists a graph homophily range that the misclassification rate increases after aggregation.
\end{theorem}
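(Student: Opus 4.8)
My plan is to prove the existence claim by exhibiting a concrete generative model in which the statement is provable, rather than attempting it for arbitrary graphs. I would adopt a Contextual Stochastic Block Model with two balanced classes: each node $v_i$ receives a label $y_i\in\{+,-\}$ and a raw feature $x_i\sim\mathcal{N}(\mu_{y_i},\sigma^2\mathbf{I})$ with $\mu_+=\mu$ and $\mu_-=-\mu$, while edges are drawn so that a neighbor of a class-$y$ node shares that class with probability $\mathcal{H}$ and takes the opposite class with probability $1-\mathcal{H}$, in agreement with the edge homophily of \Cref{def:Homophily}. I would fix a representative degree $d$ (or condition on it) and take the aggregation operator to be the neighborhood mean $\bar{x}_i=\frac{1}{d}\sum_{j\in\mathcal{N}(i)}x_j$, the prototypical MPNN operation. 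The \emph{misclassification rate} is made precise as the Bayes error of the optimal classifier on the relevant feature (raw $x_i$ versus aggregated $\bar{x}_i$); for equal-covariance Gaussians this is $\Phi(-\tfrac12\Delta)$, where $\Phi$ is the standard normal CDF and $\Delta$ is the Mahalanobis distance between the two class-conditional means.

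\textbf{Key steps.} First I would compute the class-conditional law of $\bar{x}_i$. Conditioning on $y_i=+$, each neighbor contributes $\mu$ with probability $\mathcal{H}$ and $-\mu$ otherwise, so the aggregated mean \emph{contracts} to $(2\mathcal{H}-1)\mu$, collapsing to $0$ exactly at $\mathcal{H}=\tfrac12$. Its covariance decomposes into a feature-noise term that shrinks as $\sigma^2/d$ and a rank-one term from the randomness of the neighbor labels, giving
\begin{equation}
\Sigma_{\mathrm{agg}}=\tfrac{1}{d}\bigl(\sigma^2\mathbf{I}+4\mathcal{H}(1-\mathcal{H})\,\mu\mu^{\top}\bigr).
\end{equation}
Second, applying the Gaussian Bayes-error formula before and after aggregation (using Sherman--Morrison to invert $\Sigma_{\mathrm{agg}}$ along the discriminant direction $\mu$), the two error rates become $\Phi(-|\mu|/\sigma)$ and $\Phi(-\mathrm{SNR}_{\mathrm{agg}})$ respectively, with $\mathrm{SNR}_{\mathrm{agg}}=|2\mathcal{H}-1|\sqrt{d}\,|\mu|/\sqrt{\sigma^2+4\mathcal{H}(1-\mathcal{H})|\mu|^2}$. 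Third, since $\Phi$ is strictly increasing, the post-aggregation error exceeds the pre-aggregation error precisely when $\mathrm{SNR}_{\mathrm{agg}}<|\mu|/\sigma$, i.e. when $|2\mathcal{H}-1|<\tfrac{1}{\sqrt{d}}\sqrt{1+4\mathcal{H}(1-\mathcal{H})|\mu|^2/\sigma^2}$. The right-hand side is strictly positive, so the inequality holds at $\mathcal{H}=\tfrac12$ and, by continuity of both sides in $\mathcal{H}$, throughout an open interval around $\tfrac12$. This exhibits the claimed homophily range, completing the existence argument.

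\textbf{Main obstacle.} The delicate point is not the final comparison but getting the aggregated covariance right: I expect the rank-one label-induced variance term $4\mathcal{H}(1-\mathcal{H})\mu\mu^{\top}$ to be the easiest piece to overlook, and it must be tracked carefully because it sits in exactly the discriminant direction and inflates the variance where the signal lives. A secondary subtlety is that aggregated features of neighboring nodes are correlated through shared neighbors; I would sidestep this by noting that the per-node Bayes error depends only on the \emph{marginal} law of $\bar{x}_i$, so these cross-node correlations are irrelevant to the statement, and similarly I would either fix $d$ or condition on degree to avoid the nuisance of degree fluctuations. Finally, I would close the loop with the paper's motivation by observing that feeding the ego-node features $f_{ego}(\mathbf{X})$ around the aggregation recovers the uncontracted signal $\pm\mu$ with variance $\sigma^2$, hence the pre-aggregation error $\Phi(-|\mu|/\sigma)$, so ego features provably depress the misclassification rate on exactly this range.
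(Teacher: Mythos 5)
Your proposal is correct and shares the paper's core mechanism: Gaussian class-conditional features, mean aggregation over a degree-$d$ neighborhood, the observation that at $\mathcal{H}=\tfrac12$ aggregation makes the two class-conditional laws identical (maximal error) while the raw-feature error $\Phi(-|\mu|/\sigma)$ is strictly smaller, and a continuity argument to extract an open interval around $\tfrac12$. The paper packages this as three anchor cases $\mathcal{H}\in\{0,\,0.5,\,1\}$ plus the intermediate value theorem, of which only the $\mathcal{H}=0.5$ case is load-bearing for bare existence (the cases $\mathcal{H}\in\{0,1\}$ only show the bad range is a strict sub-interval of $(0,1)$). The genuine differences are in the noise model and the payoff. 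The paper treats the neighbor class counts as deterministic --- exactly $\mathcal{H}d$ same-class and $(1-\mathcal{H})d$ cross-class neighbors --- so its aggregated law is exactly Gaussian with variance $\bigl(\mathcal{H}\sigma_1^2+(1-\mathcal{H})\sigma_2^2\bigr)/d$ and no label-randomness term; you draw neighbor labels Bernoulli$(\mathcal{H})$, which produces the rank-one term $4\mathcal{H}(1-\mathcal{H})\,\mu\mu^{\top}/d$ along the discriminant direction. That refinement buys you what the paper's IVT argument cannot give: an explicit characterization of the bad range, $|2\mathcal{H}-1|<d^{-1/2}\sqrt{1+4\mathcal{H}(1-\mathcal{H})|\mu|^2/\sigma^2}$, which in particular shows the range shrinks like $d^{-1/2}$ as the degree grows. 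The one caveat you should address: under your own model the aggregate is a binomial mixture of Gaussians, not a Gaussian, so $\Phi(-\mathrm{SNR}_{\mathrm{agg}})$ is a moment-matched (large-$d$) approximation rather than the exact Bayes error. This does not threaten the existence claim --- at $\mathcal{H}=\tfrac12$ the two class-conditional mixtures coincide exactly by symmetry, so the error is maximal there, and continuity of the exact Bayes error in $\mathcal{H}$ finishes the argument --- but you should either flag the Gaussian step as an approximation or condition on the neighbor class counts (as the paper implicitly does) so that every step is exact.
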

\vspace{-10pt}
\begin{proof}[Proof of Theorem \ref{thm:misclassification}] 
The key of this proof is to identify a homophily range and show that, if a graph falls into this range, classifying with node features after aggregation will increase the misclassification rate compared with using raw node features. 
To prove that there must exist such a homophily range, we identify three specific homophily cases where the misclassification rate changes (increases or decreases) when using aggregated node features. Then based on the intermediate value theorem, we can prove the existence of the homophily range with an increased misclassification rate.
Without loss of generality, we prove the theorem using a binary node classification problem.

We first introduce necessary definitions related to the problem.
Suppose that there is a regular graph $\mathcal{G}$ with homophily $\mathcal{H}$ and degree $d$. The nodes $\mathcal{V}$ in $\mathcal{G}$ can be categorized into two classes, and the feature of a node $v$ from each class is sampled from two normal distribution $N(\mu_{1},\sigma_{1}^{2})$ and $N(\mu_{2},\sigma_{2}^{2})$ respectively. We call the probability density function of the two distributions $f_1(x)$ and $f_2(x)$.
An optimal classifier for a binary node classification problem will categorize a node to a class that the node feature has a higher probability to be sampled from its distribution. 
The misclassification rate of such an optimal classifier $\epsilon$ is the overlapping area of $f_1(x)$ and $f_2(x)$,
\begin{equation}
\begin{aligned}
\epsilon = 1 - \phi_1(z) + \phi_2(z), \\
\end{aligned}
\label{eq:misrate}
\end{equation}
where $\phi_1$ and $\phi_2$ are the two cumulative distribution functions and $z$ is the solution of $f_1(x)=f_2(x)$. In particular, we denote the misclassification rate of using the raw node features as $\epsilon_{raw}$.

Next, we discuss how the two node feature distributions change after aggregation, which will affect the misclassification rate. 
We consider the case that the neighborhood node features are aggregated by an averaging function.
Since the graph homophily is $\mathcal{H}$, for an arbitrary node, there are $\mathcal{H}d$ neighbors from the same class and $(1-\mathcal{H})d$ from the other class in average.  
The aggregated features of the nodes from class 1 follow the distribution $N(\mathcal{H}\mu_1 + (1-\mathcal{H})\mu_2, \frac{\mathcal{H}\sigma_1^2 + (1-\mathcal{H})\sigma_2^2}{d})$. Similarly, that from class 2 follow $N(\mathcal{H}\mu_2 + (1-\mathcal{H})\mu_1, \frac{\mathcal{H}\sigma_2^2 + (1-\mathcal{H})\sigma_1^2}{d})$.

With the above problem settings, we can now identify 3 special cases $\mathcal{H} \in \{1.0,0.0,0.5\}$ whose misclassification rate based on aggregated node features becomes higher or lower compared to $\epsilon_{raw}$.
\begin{itemize}
    \item \textbf{Case 1:} If $\mathcal{H} = 1.0$, the distribution of aggregated node features are $N(\mu_1, \frac{\sigma_1^2}{d})$ and $N(\mu_2, \frac{\sigma_2^2}{d})$ for class 1 and 2 respectively. Compared to the original feature distributions, the mean values are the same but the variances are reduced, leading to a smaller overlapping area of the distributions. In other words, the misclassification rate is reduced, i.e., $\epsilon_{\mathcal{H} = 1.0}<\epsilon_{raw}$.

    \item \textbf{Case 2:} If $\mathcal{H} = 0.0$, the distribution of aggregated node features are $N(\mu_2, \frac{\sigma_2^2}{d})$ and $N(\mu_1, \frac{\sigma_1^2}{d})$ for class 1 and 2 respectively. Compared to Case 1, the distributions are exchanged, which means the misclassification rate should be the same as $\mathcal{H}=1.0$, namely $\epsilon_{\mathcal{H} = 0.0}=\epsilon_{\mathcal{H} = 1.0}<\epsilon_{raw}$.

    \item \textbf{Case 3:} If $\mathcal{H} = 0.5$, the distribution of aggregated node features are $N(\frac{\mu_1+\mu_2}{2}, \frac{\sigma_1^2+\sigma_2^2}{2d})$ for both class 1 and 2. In this case, the two distributions are indistinguishable for any classifier, i.e., $\epsilon_{\mathcal{H} = 0.5}=1.0$. 
\end{itemize}

Figure \ref{fig:misrate} shows the misclassification rate when using the aggregated node features of the graphs with different $\mathcal{H}$. Since the misclassification rate (Eq. \ref{eq:misrate}) is a continuous function and $\epsilon_{\mathcal{H} = 0.0}=\epsilon_{\mathcal{H} = 1.0}<\epsilon_{raw}<\epsilon_{\mathcal{H} = 0.5}$, according to the intermediate value theorem, there exists $0.0<\mathcal{H}_l<0.5$ and $0.5<\mathcal{H}_u<1.0$, such that for all $\mathcal{G}$ with homophily $\mathcal{H} \in (\mathcal{H}_l,\mathcal{H}_u)$, i.e. green area in Figure \ref{fig:misrate}, the misclassification rate of the aggregated node features is higher than the one of the raw features. 
In other words, there exists a graph homophily range that the misclassification rate increases after aggregation. Moreover, the above proof could be generalized to multiple classes, higher feature dimensions, and different aggregation functions.
\end{proof}

\begin{figure}[t]
\centering
\includegraphics[width=0.7\linewidth]{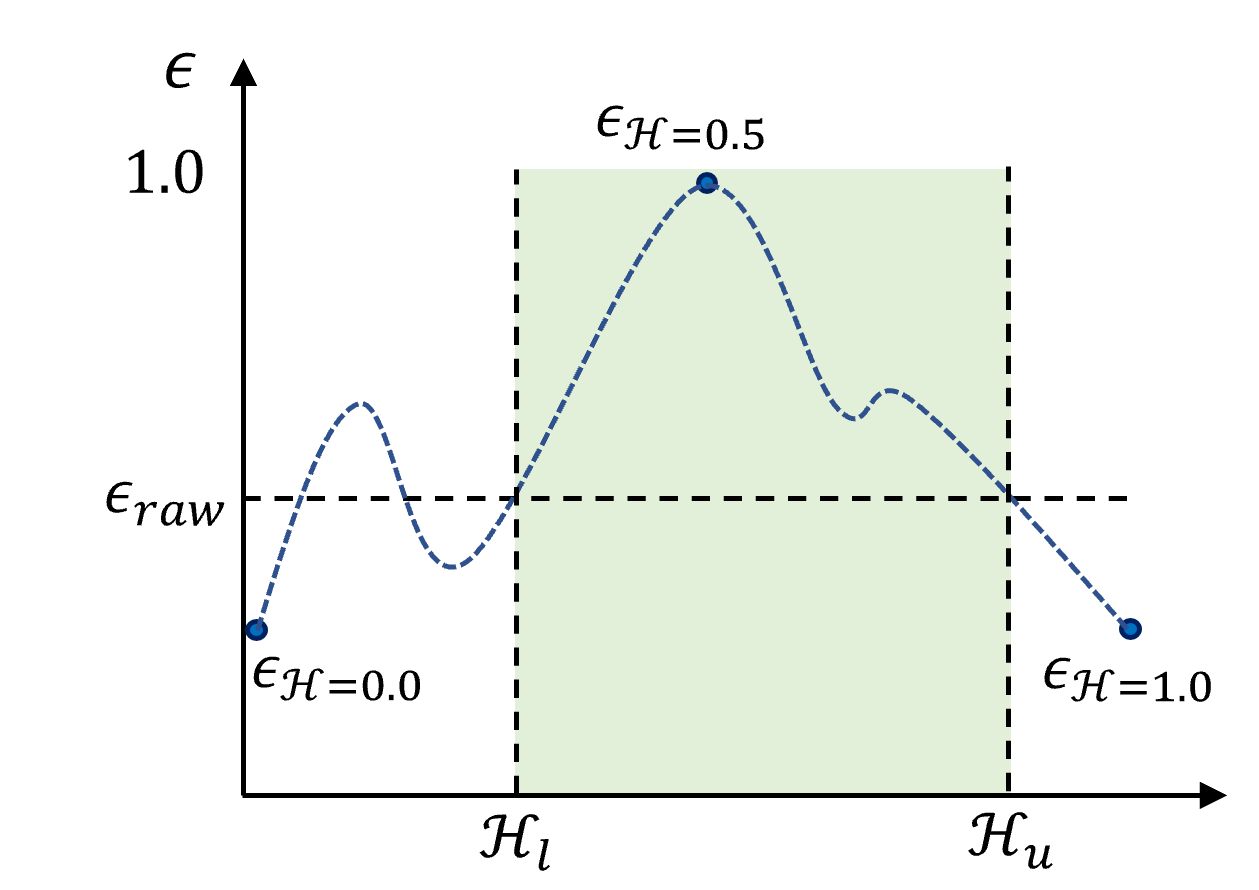}
\vspace{-10pt}
\caption{The misclassification rate of different homophily.}
\label{fig:misrate}
\end{figure}
\vspace{-8pt}

\begin{corollary} \label{cor:ego}
Combining ego-node features with the aggregated neighborhood feature, i.e., $[f_{aggr}(\mathbf{X},\mathbf{A}), f_{ego}(\mathbf{X})]$ can depress the misclassification rate in the graph homophily range of Theorem \ref{thm:misclassification}.
\end{corollary}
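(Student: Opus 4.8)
The plan is to exploit the homophily range already produced by Theorem \ref{thm:misclassification}, inside which the aggregated features are strictly worse than the raw features, and then to argue that concatenating the ego-node features lets an optimal classifier recover at least the raw-feature performance. The single principle I would invoke is that the Bayes (optimal) misclassification rate is non-increasing under feature augmentation: any decision rule defined on a sub-component of a concatenated feature vector can be replicated on the full vector by simply ignoring the extra coordinates, so the optimal rate on $[f_{aggr}(\mathbf{X},\mathbf{A}), f_{ego}(\mathbf{X})]$ can never exceed the optimal rate achievable from $f_{ego}(\mathbf{X})$ alone. Writing $\epsilon_{aggr}$ for the misclassification rate of the aggregated neighborhood features analyzed in Theorem \ref{thm:misclassification} and $\epsilon_{comb}$ for that of the concatenated representation, the whole corollary reduces to the chain $\epsilon_{comb} \le \epsilon_{raw} < \epsilon_{aggr}$.

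First I would show that the ego-node branch preserves the discriminative content of the raw features. Since $f_{ego}$ is an MLP and hence a universal approximator, it can realize (arbitrarily close to) the identity map, so the misclassification rate obtainable from $f_{ego}(\mathbf{X})$ alone equals $\epsilon_{raw}$, the raw-feature overlap given by Eq.~\ref{eq:misrate}. Second, I would formalize the monotonicity claim within the Gaussian model of Theorem \ref{thm:misclassification}: letting $\epsilon_{comb}$ denote the overlap mass of the two joint class-conditional densities over $[f_{aggr}, f_{ego}]$, the Bayes classifier on this joint feature attains error no larger than the particular classifier that discards the aggregated coordinates and applies the raw-feature Bayes rule, which yields $\epsilon_{comb} \le \epsilon_{raw}$.

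Combining the two steps gives the result directly. For every regular graph whose homophily $\mathcal{H}$ lies in the interval $(\mathcal{H}_l,\mathcal{H}_u)$ identified in Theorem \ref{thm:misclassification}, we have $\epsilon_{comb} \le \epsilon_{raw} < \epsilon_{aggr}$, so the combined representation $[f_{aggr}(\mathbf{X},\mathbf{A}), f_{ego}(\mathbf{X})]$ strictly depresses the misclassification rate relative to using the aggregated neighborhood features on their own, which is exactly the statement of the corollary.

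I expect the main obstacle to be making the feature-augmentation monotonicity rigorous rather than merely intuitive: one must verify that the optimal rule on the higher-dimensional joint density genuinely dominates the marginal raw-feature rule, which in the Gaussian setting amounts to showing that the overlapping mass of the two bivariate (aggregated, ego) densities is bounded by the overlapping mass of their ego-marginals. As in Theorem \ref{thm:misclassification}, I would establish this cleanly for the binary, one-dimensional case first and then remark that the same information-monotonicity argument extends to multiple classes, higher feature dimensions, and other aggregation functions.
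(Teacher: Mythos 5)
Your proposal is correct and follows essentially the same route as the paper's own proof: the paper likewise observes that $f_{ego}(\mathbf{X})$ attains $\epsilon_{raw}$ while $f_{aggr}(\mathbf{X},\mathbf{A})$ exceeds $\epsilon_{raw}$ in the homophily range of Theorem \ref{thm:misclassification}, and concludes that the combination ``reconciles'' the negative effect of aggregation. Your only addition is to make the paper's informal reconciliation step explicit via the standard fact that the Bayes error is non-increasing under feature concatenation (a classifier on the combined vector can simply ignore the aggregated coordinates), which is a welcome tightening but not a different argument.
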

\vspace{-10pt}
\begin{proof}[Proof of Corollary \ref{cor:ego}] 
Since we know that the misclassification rate of using ego-node features $f_{ego}(\mathbf{X})$ is $\epsilon_{raw}$, and that of using aggregated neighborhood features $f_{aggr}(\mathbf{X},\mathbf{A})$ is larger than $\epsilon_{raw}$ when the graph homophily is in the range proved in Theorem \ref{thm:misclassification}, combining $f_{ego}(\mathbf{X})$ could reconcile the negative effects of $f_{aggr}(\mathbf{X},\mathbf{A})$. In other words, it can depress the misclassification rate in this case.
\end{proof}


To summarize, Theorem \ref{thm:powerful} and Corollary \ref{cor:ego} reveal that combining $f_{ego}(\mathbf{X})$ and $f_{strc}(\mathbf{A})$ with $f_{aggr}(\mathbf{X},\mathbf{A})$ will improve the expressiveness and generalizability of MPNNs.

\section{Information-Enhanced GNN (INGNN)} \label{sec:ModelDesign}
Based on the analysis in Section \ref{sec:theory}, we design \pjn{} which achieves high expressive power and good generalizability solely based on the graph original information. \pjn{} transforms the graph original information into ego-node features, graph structure features, and aggregated neighborhood features, and then adaptively fuse them to derive nodes' representation.  We explain its two main components, \textit{feature extractors} and \textit{adaptive feature fusion} and then the \textit{training strategy} in detail. 



\textbf{Feature Extractors.} 
To extract ego-node features, we use a simple linear transformation of the node feature matrix $\mathbf{X} \in \mathbb{R}^{N \times D}$:
$\mathbf{H}_\text{ego} = \mathbf{X} \mathbf{W}_\text{ego},$
where $\mathbf{W}_\text{ego} \in \mathbb{R}^{D \times d}$ is the transformation matrix and $\mathbf{H}_\text{ego} \in \mathbb{R}^{N \times d}$ is the extracted ego-node features. We use linear transformation instead of MLP because linear transformation consistently achieves the best accuracy in our empirical evaluation. 

To extract the aggregated neighborhood features, we adopt the state-of-the-art design in the message passing framework, where feature transformation and propagation are decoupled~\cite{zeng2021decoupling}. 
It propagates the ego-node features $\mathbf{H}_\text{ego}$ and combines the features of different propagation steps to enlarge the receptive field: 
$\mathbf{H}_\text{agg} = \sum_{i=1}^{s_1} \hat{\mathbf{A}}^{i} \mathbf{H}_\text{ego},$
where $\hat{\mathbf{A}} = \mathbf{D}^{- \frac{1}{2}} \mathbf{A} \mathbf{D}^{- \frac{1}{2}}$ is the normalized adjacency matrix, and $\mathbf{D}$ is the diagonalized node degrees. $s_1$ is the maximum propagation step. 


To extract the graph structure features, we compute the combination of different powers of the adjacency matrix after a linear transformation: 
$\mathbf{H}_\text{strc} = \sum_{j=1}^{s_2} \mathbf{A}^{j} \mathbf{W}_\text{strc},
$
where $\mathbf{W}_\text{strc} \in \mathbb{R}^{N \times d}$ is the transformation matrix, and $s_2$ is the maximum power of the adjacency matrix. 
We use a simple linear transformation for its efficiency and we do not observe performance gains from an MLP.  

\textbf{Adaptive Feature Fusion.}
The adaptive feature fusion module assigns a trainable scalar importance score for each feature so that it can \textit{automatically} learn the features' importance from the input graph: 
\begin{equation}
\begin{aligned}
\mathbf{H} &= \sigma(\pi_1 \mathbf{H}_\text{ego} + \pi_2 \mathbf{H}_\text{agg} + \pi_3 \mathbf{H}_\text{strc}), \\
\pi_i &= \frac{\exp{p_i}}{\sum_{j=1}^{3} \exp{p_j}},
\end{aligned}
\end{equation}

where $\mathbf{H}$ is the fused feature and $\sigma$ is a activation function ReLU. $\mathcal{P}=\{ p_i | i=1,2,3 \}$ are trainable parameters, and $\{ \pi_i | i=1,2,3\}$ are the weights for each feature.


After obtaining the fused feature $\mathbf{H}$, we predict the labels for each node with a linear classifier,
$\mathbf{Y}_\text{pred} = \text{Softmax}(\mathbf{H} \mathbf{W}_\text{pred})$, where $\mathbf{Y}_\text{pred} \in \mathbb{R}^{N \times C}$ is the predictions and $\mathbf{W}_\text{pred} \in \mathbb{R}^{d \times C}$ is the predictor's parameters.


\textbf{Training with Bi-level Optimization.}
To train our model parameters and feature fusion weights jointly, we borrow the idea of bi-level optimization \cite{liu2018darts,dong2019searching}. 
Suppose the model parameters is $\mathbf{W}$ and the parameters for feature fusion is $\mathcal{P}$. 
The loss function of the node classification tasks is:
\begin{equation}
  \mathcal{L}(\mathbf{W},\mathcal{P},\mathcal{G},\mathbf{X},\mathbf{Y}) = - \frac{1}{|\mathbf{Y}|} \sum_{y_i \in \mathbf{Y}} y_i^\text{T} log(\hat{y}_i),
\end{equation}
where $\mathcal{G}$ is the graph, $\mathbf{X}$ and $\mathbf{Y}$ are raw node features and ground-truth labels respectively, and $\hat{y}_i \in \mathbf{Y}_\text{pred}$ is the prediction of our model for node $i$. Then the objective of our bi-level optimization is:
\begin{equation}
\begin{aligned}
  &\min_{\mathcal{P}} \mathcal{L}_\text{valid}(\mathbf{W}^{*},\mathcal{P},\mathcal{G},\mathbf{X}_\text{valid},\mathbf{Y}_\text{valid}),\\
  \text{s.t.} \quad 
  &\mathbf{W}^{*} = \arg\min_{\mathbf{W}} \mathcal{L}_\text{train}(\mathbf{W},\mathcal{P},\mathcal{G},\mathbf{X}_\text{train},\mathbf{Y}_\text{train}).
\end{aligned}
\end{equation}

In short, we optimize the model parameters $\mathbf{W}$ on the train set, while optimizing the feature fusion parameters $\mathcal{P}$ on the validation set alternatively.

\mypar{Time Complexity.} The time complexity of extracting the ego-node features is $\bigO(\nnz(\mathbf{X}) \cdot d)$, where $\nnz(\mathbf{X})$ is the number of non-zero values in the node feature $\mathbf{X}$, $d$ is the hidden feature dimension. Extracting the aggregated neighborhood features takes $\bigO(s_1 M N d)$ to aggregate features from $s_1$-hop neighbors, where $M$ and $N$ are the number of edges and nodes in the graph. Similarly, extracting the graph structure features takes $\bigO(s_2 M N d)$ with sparse matrix multiplications. The feature fusion step takes $\bigO(N d)$ for reweighting and summing up the features. Finally, the linear classifier takes $\bigO(N d C)$ to do the predictions.

\section{Experiments} \label{sec:Experiments}

%

\subsection{Experimental Settings} \label{sec:ExpSet}
\textbf{Datasets.}
We conduct experiments on both synthetic datasets and real datasets to examine the efficacy of our model in terms of test accuracy. 
We generate synthetic graphs \texttt{syn-cora} with the approach in H2GCN~\cite{zhu2020beyond}. 
The \texttt{syn-cora} dataset provides 11 graphs with homophily ranging from 0.0 to 1.0 with 0.1 as the interval.
The raw node features and labels for each graph are sampled from the \texttt{cora} dataset~\cite{sen2008collective}. 
The edges of the graph are generated gradually according to the given homophily.  
We evaluate the average test accuracy over five trials on these graphs with the official data splits.

We also evaluate our method and existing GNNs on 9 real-world datasets. 
Table~\ref{tab:dataset-info} in Appendix Section~\ref{sect:app-dataset-details} summarizes the statistics of the datasets. 
\texttt{Cora} \cite{sen2008collective}, \texttt{CiteSeer} \cite{sen2008collective}, \texttt{PubMed} \cite{sen2008collective}, and \texttt{Coauthor CS \& Physics} \cite{shchur2018pitfalls} are widely-used homophilic graphs, while \texttt{penn94} \cite{traud2012social}, \texttt{arXiv-year} \cite{hu2020open}, \texttt{genius} \cite{lim2021expertise}, and \texttt{twitch-gamer} \cite{rozemberczki2021twitch} are larger scale graphs with relatively low homophily \cite{lim2021new,lim2021large}. 
We follow the official data splits as explained in Appendix Section~\ref{sect:app-dataset-details}. We report the average and the standard deviation of the test accuracy in the following experiments.

\textbf{Baselines for Comparison.}
Our baselines include \textit{MLP-based methods} (MLP, LINKX \cite{lim2021large}), \textit{regular MPNNs} (GCN \cite{kipf2016semi} and GAT \cite{velivckovic2017graph}), \textit{MPNNs with improved generalizability} (DAGNN \cite{liu2020towards}, AdaGNN \cite{dong2021adagnn}, BerNet \cite{he2021bernnet}, ACM-GCN \cite{luan2022revisiting}, and GloGNN \cite{li2022finding}), and \textit{high-order MPNNs} with higher expressive power (H2GCN \cite{zhu2020beyond}, MIXHOP \cite{abu2019mixhop}, and GPR-GNN \cite{chien2020adaptive}). 
We did grid-based hyperparameter search for all baselines and our approach. Hyperparamters are reported in  Appendix Section~\ref{app:hyperparameter}. 

\textbf{Hardware Specifications.} We run experiments on both synthetic and real-world benchmarks with a 12-core CPU, 8 GB Memory, and an NVIDIA GeForce GTX 1080 Ti GPU with 11 GB GPU Memory for all the methods except for H2GCN, because it suffers from the out-of-memory (OOM) problem. For H2GCN, we use a workstation with a 12-core CPU, 32 GB Memory, and an NVIDIA Quadro RTX 8000 GPU with 48 GB GPU Memory.


\subsection{Performance of INGNN} \label{sec:Results}
\textbf{Accuracy on Different Datasets.}
Table \ref{tab:syn-cora-results} reports the average test accuracy over five random splits on the graphs in the \texttt{syn-cora} dataset. 
Overall, \pjn{} outperforms the existing methods on most datasets, with six at top-1 and four at top-2. Especially, \pjn{} outperforms it on a larger range of homophily ($0.2 \sim 0.8$), which are also more common cases in real-world datasets. It pushes the best accuracy up to $2.76\%$. A full version with standard deviation can be found in Appendix Section \ref{sect:app-synthetic-details}. 

\begin{table*}[h]
\centering
\caption{Test accuracy of different methods on the graphs with different homophily in \texttt{syn-cora} dataset. {\color{red} \textbf{Red}} and {\color{blue} blue} represent top-1 and top-2 ranking in terms of accuracy respectively.}
\label{tab:syn-cora-results}
\scriptsize
\tabcolsep=0.12cm
\begin{tabular}{c|ccccccccccc}
\toprule
synh   & 0    & 0.1    & 0.2    & 0.3    & 0.4    & 0.5    & 0.6    & 0.7    & 0.8    & 0.9    & 1    \\
\midrule
\midrule
MLP	& $69.20$	  & $69.20$	 & $69.20$	 & $69.20$	 & $69.20$	  & $69.20$	 & $69.20$	 & $69.20$	 & $69.20$	 & $69.20$	 & $69.20$	 \\
LINKX  & $72.09$	 & $70.54$	 & $69.76$	 & $70.13$	 & $71.34$	 & $74.53$	 & $77.16$	 & $80.35$	 & $83.30$	 & $87.59$	 & $89.60$	 \\
GCN	& $28.61$	 & $30.64$	 & $36.03$	 & $45.15$	 & $51.39$	  & $65.04$	 & $74.48$	 & $82.22$	 & $91.21$	 & $96.19$	 & $99.92$	 \\
GAT	& $29.41$	 & $30.32$	 & $34.83$	 & $43.86$	 & $51.15$	 & $64.80$	 & $74.34$	 & $81.45$	 & $90.46$	 & $95.79$	 & {\color[HTML]{FF0000} $\mathbf{100.0}$} \\
DAGNN  & $34.32$	 & $39.49$	 & $45.01$	 & $54.48$	 & $60.51$	 & $72.36$	 & $80.00$	 & $86.49$	 & $93.32$	 & {\color[HTML]{FF0000} $\mathbf{97.48}$} & {\color[HTML]{0000FF} $99.95$}	  \\
H2GCN  & {\color[HTML]{FF0000} $\mathbf{76.43}$} & {\color[HTML]{FF0000} $\mathbf{73.86}$} & {\color[HTML]{0000FF} $71.58$}	  & {\color[HTML]{0000FF} $72.17$}	  & {\color[HTML]{0000FF} $72.95$}	  & {\color[HTML]{0000FF} $78.31$}	  & {\color[HTML]{0000FF} $83.27$}	  & {\color[HTML]{FF0000} $\mathbf{87.43}$} & $92.09$	 & $97.00$	 & $98.98$	 \\
MIXHOP & $39.44$	 & $38.95$	 & $41.05$	 & $48.93$	 & $55.09$	 & $64.75$	 & $74.45$	 & $82.44$	 & $91.45$	 & $96.25$	 & {\color[HTML]{FF0000} $\mathbf{100.0}$} \\
GPR-GNN & $67.86$	 & $61.96$	 & $61.21$	 & $64.69$	 & $67.67$	 & $74.56$	 & $80.19$	 & $86.68$	 & {\color[HTML]{0000FF} $93.54$}	  & {\color[HTML]{0000FF} $97.45$}	  & {\color[HTML]{FF0000} $\mathbf{100.0}$} \\ \hline 
\pjn{}   & {\color[HTML]{0000FF} $72.49$}	  & {\color[HTML]{0000FF} $73.67$}	  & {\color[HTML]{FF0000} $\mathbf{73.11}$} & {\color[HTML]{FF0000} $\mathbf{74.32}$} & {\color[HTML]{FF0000} $\mathbf{75.71}$} & {\color[HTML]{FF0000} $\mathbf{79.49}$} & {\color[HTML]{FF0000} $\mathbf{84.67}$} & {\color[HTML]{0000FF} $87.32$}	  & {\color[HTML]{FF0000} $\mathbf{93.70}$} & $94.75$	 & {\color[HTML]{0000FF} $99.95$}	 \\
\bottomrule

\end{tabular}
\vspace{-10pt}
\end{table*}

Table \ref{tab:real-results} reports the results on the 9 real datasets. Overall, \pjn{} achieves five top-1 and three top-2 over 9 datasets, with an average rank of 1.78.
\pjn{} could always achieve accuracy improvement from $3.05\% \sim 30.94\%$ compared with MLP, $1.17\% \sim 10.51\%$ compared with GCN, $0.99\% \sim 7.16\%$ compared with GAT. While compared with MLP-based LINKX, \pjn{} obtains better accuracy in 8 out of 9 datasets.
Then for MPNNs with better generalizability, i.e, DAGNN, AdaGNN, BerNet, ACM-GCN, and GloGnn, we achieve competitive accuracy on heterophilic graphs (i.e., \texttt{penn94}, \texttt{arXiv-year}, \texttt{genius}, and \texttt{twitch-gamer}) and outperforms them on the other datasets.
As for MPNNs with higher expressiveness, i.e., H2GCN, MIXHOP, and GPR-GNN, the accuracy of \pjn{} are far better in all the datasets with different settings.

\begin{table*}[htb]
\centering
\caption{Average test accuracy $\pm$ standard deviation on the real datasets. 
{\color{red} \textbf{Red}} and {\color{blue} blue} represent top-1 and top-2 ranking in terms of accuracy respectively. OOM means a model runs out of memory on a specific dataset.}
\label{tab:real-results}
\scriptsize
\tabcolsep=0.01cm
\begin{tabular}{c|ccccccccc|c}
\toprule
      & arXiv-year       & penn94       & twitch-gamer     & genius       & CiteSeer     & PubMed       & Cora     & Coauthor CS      & Coauthor Physics     & Avg. Rank \\
Homophily & 0.22     & 0.47     & 0.55     & 0.62     & 0.74     & 0.80     & 0.81     & 0.81     & 0.93     & -      \\
\#Nodes   & 169,343      & 41,554       & 168,114      & 421,961      & 3,327    & 19,717       & 2,708    & 18,333       & 34,493       & -      \\
\#Edges   & 1,166,243    & 1,362,229    & 6,797,557    & 984,979      & 4,552    & 44,324       & 5,278    & 81,894       & 247,962      & -      \\
\#Classes & 5    & 2    & 2    & 2    & 6    & 3    & 7    & 15       & 5    & -      \\
\midrule
\midrule
MLP       & $36.70_{\pm 0.21}$   & $73.6_{\pm 0.40}$     & $60.92_{\pm 0.07}$   & $86.68_{\pm 0.09}$     & $50.94_{\pm 4.20}$   & $66.04_{\pm2.29}$    & $52.56_{\pm2.55}$  & $83.08_{\pm1.00}$    & $82.15_{\pm±5.11}$       &  12.22      \\
LINKX     & {\color[HTML]{0000FF} $56.00_{\pm 1.34}$} & $84.71_{\pm 0.52}$ & $66.06_{\pm 0.19}$ & {\color[HTML]{0000FF} $90.77_{\pm 0.27}$} & $53.66_{\pm 3.69}$    & $67.66_{\pm 4.29}$    & $62.66_{\pm 2.12}$    & $88.53_{\pm 1.43}$    & $89.37_{\pm 1.52}$    &   7.56     \\
\midrule
GCN       & $46.02_{\pm 0.26}$    & $82.47_{\pm 0.27}$    & $62.18_{\pm 0.26}$    & $87.42_{\pm 0.37}$    & $63.36_{\pm 2.06}$    & $78.12_{\pm 1.60}$    & $77.90_{\pm 1.18}$    & $90.35_{\pm 0.88}$    & $92.39_{\pm 0.89}$    &   7.78     \\
GAT       & $49.37_{\pm 0.20}$    & $81.45_{\pm 0.55}$    & $62.32_{\pm 0.23}$    & $86.59_{\pm 1.06}$    & $65.90_{\pm 1.88}$    & $76.78_{\pm 2.38}$    & $76.98_{\pm 1.75}$    & $88.86_{\pm 0.65}$    & $92.57_{\pm 0.60}$    &    7.67    \\ \midrule
DAGNN     & $38.49_{\pm 0.28}$    & $74.84_{\pm 0.52}$    & $60.36_{\pm 0.14}$    & $71.11_{\pm 9.11}$    & {\color[HTML]{0000FF} $67.12_{\pm 1.71}$} & $78.28_{\pm 1.58}$ & {\color[HTML]{0000FF} $82.34_{\pm 1.42}$} & {\color[HTML]{0000FF} $91.83_{\pm 0.72}$} & $93.22_{\pm 0.77}$    &  6.78     \\
AdaGNN     & $49.49_{\pm 0.16}$    & $83.55_{\pm 0.31}$    & $64.64_{\pm 0.27}$    & $89.68_{\pm 0.81}$    & $63.44_{\pm 1.94}$ & $76.80_{\pm 1.45}$ & $80.66_{\pm 1.07}$ & $90.80_{\pm 0.84}$ & $92.66_{\pm 0.89}$    &   5.33   \\
BernNet    & $36.49_{\pm 0.18}$    & $82.81_{\pm 0.51}$    & $62.37_{\pm 0.21}$    & $88.83_{\pm 0.68}$    & $53.52_{\pm 4.70}$ & $75.32_{\pm 1.55}$ & $76.52_{\pm 2.97}$ & $91.43_{\pm 0.91}$ & $91.30_{\pm 0.86}$    &   8.44    \\
ACM-GCN    & $48.41_{\pm 0.30}$    & $82.68_{\pm 0.60}$    & $61.48_{\pm 0.61}$    & $81.19_{\pm 6.15}$    & $65.08_{\pm 1.91}$ & $76.70_{\pm 1.32}$ & $78.90_{\pm 1.66}$ & $90.50_{\pm 0.54}$ & $92.70_{\pm 0.74}$    &   7.33    \\
GloGNN     & $54.52_{\pm 0.39}$    & {\color[HTML]{FF0000} $\mathbf{85.60_{\pm 0.27}}$}    & {\color[HTML]{FF0000} $\mathbf{66.34_{\pm 0.29}}$}   &  {\color[HTML]{FF0000} $\mathbf{90.91_{\pm 0.13}}$}  & $55.72_{\pm 3.09}$ & $72.72_{\pm 1.16}$ & $74.70_{\pm 1.62}$ & $90.50_{\pm 1.29}$ & $89.16_{\pm 2.61}$    &  6.33     \\
\midrule
H2GCN     & $49.09_{\pm 0.10}$    & $81.54_{\pm 0.56}$      & OOM      & OOM      & $64.40_{\pm 1.44}$    & $76.30_{\pm 2.80}$    & $79.24_{\pm 1.75}$    & $91.18_{\pm 0.58}$    & {\color[HTML]{0000FF} $93.56_{\pm 0.48}$}    &  5.86    \\
MIXHOP    & $51.78_{\pm 0.26}$    & $83.63_{\pm 0.54}$    & $65.65_{\pm 0.30}$    & $90.61_{\pm 0.24}$ & $56.98_{\pm 4.80}$    & $76.14_{\pm 2.37}$    & $73.80_{\pm 4.02}$    & $89.79_{\pm 0.91}$    & $93.33_{\pm 0.75}$    &  6.44      \\
GPR-GNN   & $44.89_{\pm 0.20}$    & $81.12_{\pm 0.63}$    & $62.00_{\pm 0.25}$    & $90.02_{\pm 0.13}$    & $64.72_{\pm 1.59}$    & {\color[HTML]{0000FF} $79.12_{\pm 0.87}$}    & $80.44_{\pm 1.53}$    & $90.74_{\pm 0.60}$    & {\color[HTML]{FF0000} $\mathbf{93.86_{\pm 0.36}}$} &   5.78     \\
\midrule
\pjn  &{\color[HTML]{FF0000} $\mathbf{56.53_{\pm 0.15}}$} & {\color[HTML]{0000FF} $85.44_{\pm 0.60}$} & {\color[HTML]{0000FF} $66.07_{\pm 0.11}$} & $89.73_{\pm 0.51}$    & {\color[HTML]{FF0000} $\mathbf{69.82_{\pm 1.08}}$} & {\color[HTML]{FF0000} $\mathbf{79.98_{\pm 1.57}}$} & {\color[HTML]{FF0000} $\mathbf{83.50_{\pm 0.93}}$} & {\color[HTML]{FF0000} $\mathbf{93.15_{\pm 0.36}}$} & {\color[HTML]{0000FF} $93.56_{\pm 0.65}$} & 1.78 \\
\bottomrule
\end{tabular}
\vspace{-10pt}
\end{table*}
  
\mypar{Ablation Study.} 
We present an ablation study to show the effectiveness of our design choices: the three graph features, adaptive feature fusion, and bi-level optimization. 
Table \ref{tab:ablation} reports the accuracy results from five variants of our model by removing one design element at a time.

\textit{Graph features.} The rows \textit{w/o egg}, \textit{w/o agg}, and \textit{w/o strc} demonstrate the contribution of the three features extracted from the original graph information on the model's accuracy.
Overall, models without one of the features suffer from $2.11\% \sim 5.43\%$ accuracy drop on all the datasets on average, indicating the importance of these features on graph representation learning regardless.
Specifically, models without the aggregated neighborhood features have a larger accuracy drop on homophilic graphs (i.e., \texttt{Cora}, \texttt{CiteSeer}, \texttt{PubMed}, \texttt{Coauthor CS \& Physics}). It echoes the high performance of MPNNs (e.g., GCN and DAGNN) on homophilic graphs. 
On the contrary, models without the graph structure features suffer from severe accuracy drops, up to $20.52\%$, especially on heterophilic graphs. It echoes the expressive power benefits of integrating the graph structure features as proved in Theorem \ref{thm:powerful}.   
Moreover, models without the ego-node features have accuracy drops from $0.38\% \sim 4.31\%$, which echoes its depression effect on the misclassification rate as stated in Theorem \ref{thm:misclassification}.

\textit{Adaptive feature fusion.} Models without adaptive feature fusion treat the three graph features equally and sum them up without re-weighting to produce the final node feature. It suffers from an accuracy drop of $1.84\%$ on average, indicating the importance of adaptive feature fusion. Another widely used approach for feature fusion is to concatenate the features. Although concatenation could learn separated parameters for different features, the features could not be balanced well, leading to similar unsatisfying results as summing up the features (see Appendix Section \ref{sec:app_ablation_concat}). 

\textit{Bi-level optimization.} 
Without bi-level optimization, the feature fusion weights are jointly optimized with the model parameters on the training dataset. Its accuracy drops by $2.27\%$ on average. 
This phenomenon is consistent with the observation in the NAS domain \cite{liu2018darts}: training model parameters and feature fusion weights jointly on the same training set would cause over-fitting and thus poor generalization performance.

\begin{table*}[htb]
\centering
\caption{Ablation studies on the real datasets.}
\label{tab:ablation}
\scriptsize
\tabcolsep=0.08cm
\begin{tabular}{l|lllllllll|c}
\toprule
             &\begin{tabular}[c]{@{}l@{}}arXiv\end{tabular}  & penn94 & 
             \begin{tabular}[c]{@{}l@{}}twitch\end{tabular} & genius & CiteSeer & PubMed & Cora  & \begin{tabular}[c]{@{}l@{}} CS\end{tabular} & \begin{tabular}[c]{@{}l@{}}  Physics\end{tabular} & $\Delta$ Avg. \\
Homophily    & 0.22       & 0.47   & 0.55        & 0.62  & 0.74    & 0.80  & 0.81  & 0.81                                                   & 0.93                                                      & -          \\
\midrule
\midrule
\textbf{\pjn{}} & \textbf{56.53} & \textbf{85.44} & \textbf{66.07} & \textbf{89.72} & \textbf{69.82} & \textbf{79.98} & \textbf{83.50} & \textbf{93.15}   & \textbf{93.56}                                               & -          \\
w/o ego      & 53.80  \textsubscript{2.74\textdownarrow}    & 81.83 \textsubscript{3.62\textdownarrow}  & 65.68   \textsubscript{0.40\textdownarrow}     & 85.61 \textsubscript{4.12\textdownarrow} & 68.44  \textsubscript{1.38\textdownarrow}  & 78.92 \textsubscript{1.06\textdownarrow} & 83.12\textsubscript{0.38\textdownarrow} & 88.84  \textsubscript{4.31\textdownarrow}                                                 & 92.56   \textsubscript{1.00\textdownarrow}                                                     & 2.11\textdownarrow       \\
w/o agg      & 53.17  \textsubscript{3.36\textdownarrow}    & 84.81 \textsubscript{0.63\textdownarrow} & 65.78  \textsubscript{0.30\textdownarrow}      & 89.78 \textsubscript{0.05\textuparrow} & 57.54 \textsubscript{12.28\textdownarrow}   & 72.32 \textsubscript{7.66\textdownarrow}  & 72.66 \textsubscript{10.84\textdownarrow} & 87.72 \textsubscript{5.43\textdownarrow}                                                  & 85.14 \textsubscript{8.42\textdownarrow}                                                      & 5.43\textdownarrow       \\
w/o strc     & 36.01 \textsubscript{20.52\textdownarrow}     & 75.50 \textsubscript{9.94\textdownarrow} & 61.59  \textsubscript{4.49\textdownarrow}      & 86.87 \textsubscript{2.86\textdownarrow} & 69.82 \textsubscript{0.00\textdownarrow}  & 81.12 \textsubscript{0.36\textuparrow} & 83.34 \textsubscript{0.16\textdownarrow} & 92.46    \textsubscript{0.69\textdownarrow}                                               & 93.41   \textsubscript{0.15\textdownarrow}                                                     & 4.35\textdownarrow       \\
w/o fusion   & 53.45  \textsubscript{3.08\textdownarrow}    & 84.35 \textsubscript{1.09\textdownarrow} & 65.76  \textsubscript{0.32\textdownarrow}     & 87.64 \textsubscript{2.09\textdownarrow} & 67.72 \textsubscript{2.10\textdownarrow}  & 76.04 \textsubscript{3.94\textdownarrow} & 82.48 \textsubscript{1.02\textdownarrow} & 91.13  \textsubscript{2.02\textdownarrow}                                                 & 92.64   \textsubscript{0.92\textdownarrow}                                                     & 1.84\textdownarrow       \\
w/o bi-level & 53.27  \textsubscript{3.26\textdownarrow}    & 83.71 \textsubscript{1.74\textdownarrow} & 65.74  \textsubscript{0.34\textdownarrow}      & 88.25 \textsubscript{1.48\textdownarrow} & 68.60 \textsubscript{1.22\textdownarrow}   & 73.32 \textsubscript{6.66\textdownarrow} & 81.02 \textsubscript{2.48\textdownarrow} & 90.31 \textsubscript{2.84\textdownarrow}                                                  & 93.15 \textsubscript{0.41\textdownarrow}                                                       & 2.27\textdownarrow      \\
\bottomrule
\end{tabular}
\vspace{-10pt}
\end{table*}

\begin{table*}[t]
\centering
\caption{Average test accuracy $\pm$ standard deviation when adding features on GCN, LINKX, and H2GCN.}
\label{tab:addfeature}
\scriptsize
\tabcolsep=0.01cm
\begin{tabular}{c|ccccccccc|c}
\toprule
      & arXiv-year       & penn94       & twitch-gamer     & genius       & CiteSeer     & PubMed       & Cora     & CS     & Physics     & Avg. Improv.  \\
Homophily & 0.22     & 0.47     & 0.55     & 0.62     & 0.74     & 0.80     & 0.81     & 0.81     & 0.93     & -      \\
\midrule
\midrule
GCN $+\mathbf{H}_\text{ego}+\mathbf{H}_\text{strc}$      & $54.60_{\pm 0.34}$    & $82.85_{\pm 0.51}$    & $65.43_{\pm 0.11}$    & $89.01_{\pm 0.97}$    & $63.80_{\pm 1.93}$    & $77.32_{\pm 1.17}$    & $78.52_{\pm 1.99}$    & $90.41_{\pm 1.25}$    & $91.75_{\pm 1.60}$    &  1.50   \\
H2GCN $+\mathbf{H}_\text{strc}$    & $50.71_{\pm 0.15}$    & $83.96_{\pm 0.53}$    & OOM      & OOM      & $65.18_{\pm 2.13}$    & $75.24_{\pm 2.36}$    & $78.04_{\pm 1.10}$    & $92.03_{\pm 0.43}$    &  $92.97_{\pm 0.45}$    &     0.40 \\
LINKX $+\mathbf{H}_\text{aggr}$    & $55.13_{\pm 0.26}$ & $84.79_{\pm 0.21}$ & $66.08_{\pm 0.24}$ & $91.17_{\pm 0.16}$ & $64.88_{\pm 2.03}$    & $75.72_{\pm 1.72}$    & $81.26_{\pm 1.71}$    & $91.99_{\pm 0.46}$    & $93.11_{\pm0.56}$    &  4.97\\  
\bottomrule
\end{tabular}
\vspace{-10pt}
\end{table*}

\begin{figure}[!htb]
\centering
\subfigure[Results on \texttt{syn-cora} dataset.] 
{
\includegraphics[width=0.75\linewidth]{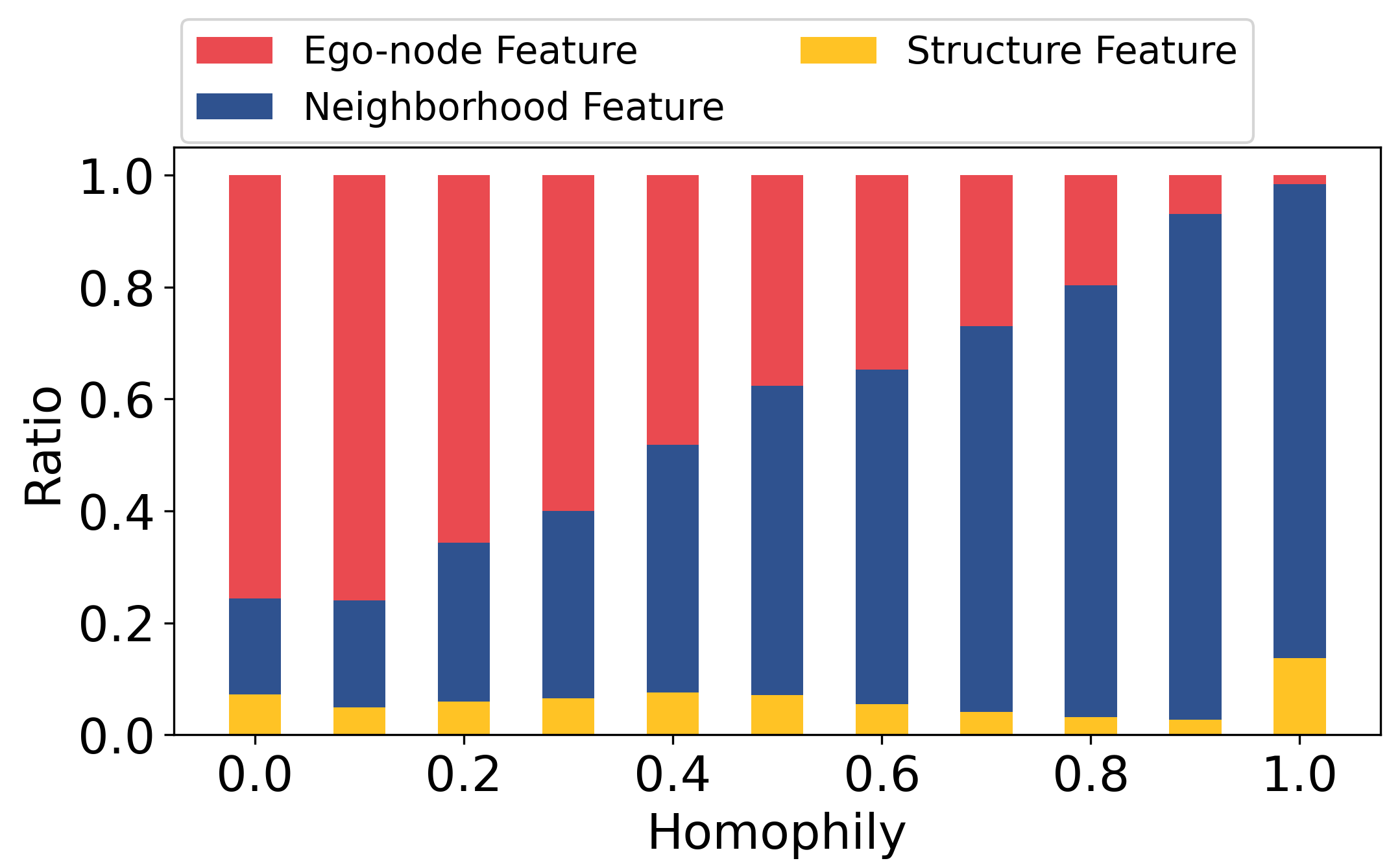}\label{fig:feature_proportion_syn}
}
\subfigure[Results on real datasets.]{
\includegraphics[width=0.75\linewidth]{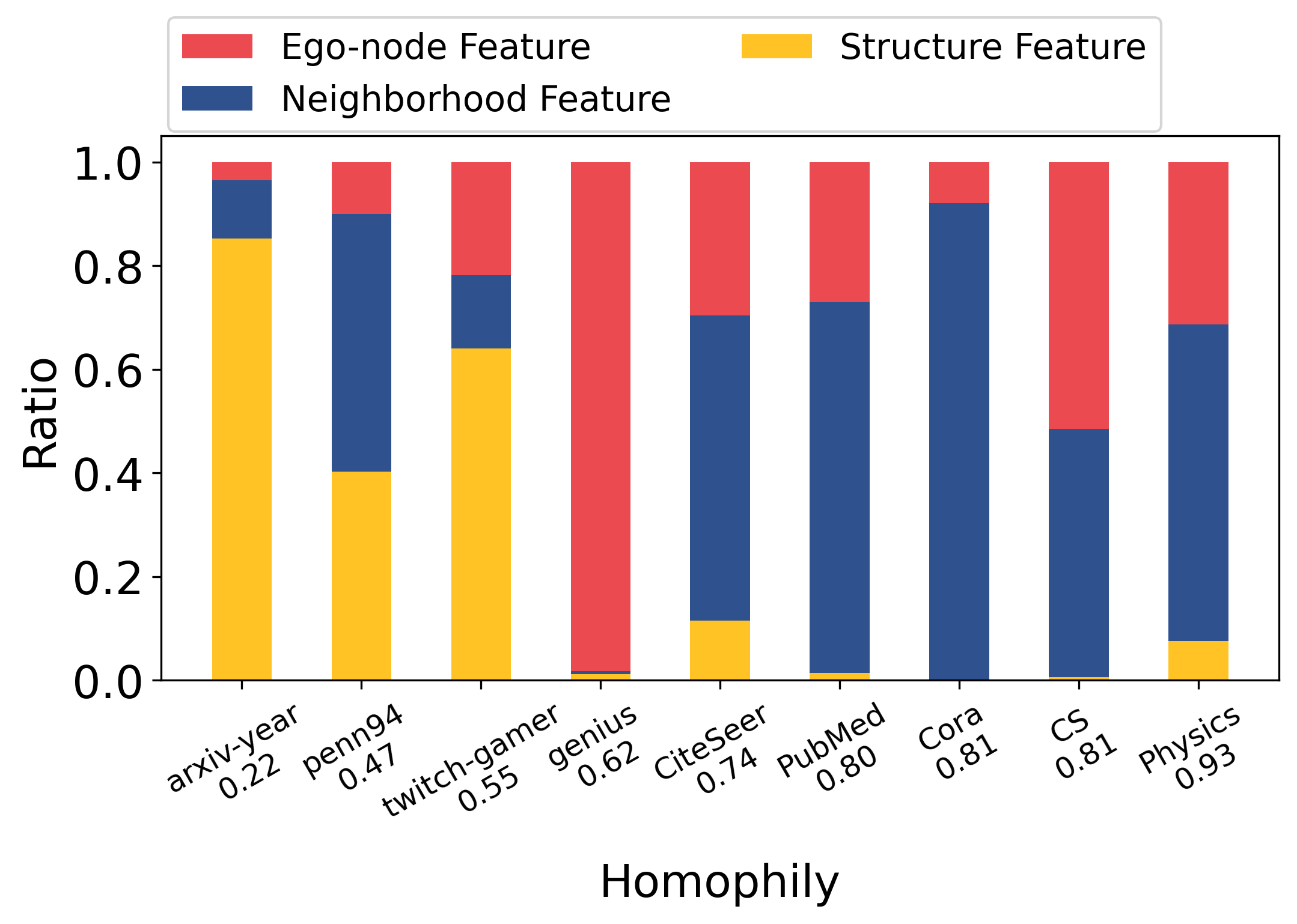}\label{fig:feature_proportion_real}
}
\vspace{-10pt}
\caption{Importance of the features after feature fusion.}
\label{fig:feature_proportion}
\end{figure}

\textbf{Insights from Feature Fusion.} \label{sec:FusionAnalysis}
We measure the importance of the three graph features by computing their proportion after feature fusion. Formally, the importance score is computed as
 $\mathcal{I}_{s} = \frac{\pi_{i} \langle \mathbf{H}_{s} \rangle}{\pi_1 \langle \mathbf{H}_\text{ego} \rangle + \pi_2 \langle \mathbf{H}_\text{agg} \rangle + \pi_3 \langle \mathbf{H}_\text{strc} \rangle},
$
where $s \in \{ego,agg,strc\}$ and $\langle \cdot \rangle$ computes the averaged absolute value of a specific feature.

Figures~\ref{fig:feature_proportion_syn} show the results on \texttt{syn-cora}. We observe three dominant trends.
(1) The importance of the aggregated neighborhood features increases from 0.2 to 0.8 as the homophily increases, echoing the intuition that a node has similar properties to its neighbors on homophilic graphs.
(2) The importance of the ego-node features increases from close to around 0.05 to 0.8 as the homophily of a graph becomes lower. This indicates that the ego-node features are a more reliable signal than other features in some cases, which is consistent with Theorem \ref{thm:misclassification}. 
(3) All the features have a non-trivial importance score for graphs with homophily within 0.2 and 0.8 (which is common for real graphs). This indicates the importance of all the features in learning node embeddings, echoing insights from the ablation study.
Since the graph links in \texttt{syn-cora} are generated randomly, which means that the graph structure features cannot capture meaningful information from the adjacency matrix as we expected, it's hard to figure out the trend of this feature from Figure \ref{fig:feature_proportion_syn}.

Figure \ref{fig:feature_proportion_real} shows the results on real graphs. 
Since real graphs have different intrinsic graph properties, including the raw node features and the graph links, we cannot compare the changes in feature importance across graphs like what we did for \texttt{syn-cora}. Instead, we focus on comparing the importance of different features given specific graphs. Our observations are summarized as follows.
(1) For homophilic graphs (i.e., \texttt{CiteSeer}, \texttt{PubMed}, \texttt{Cora}, \texttt{Coauthor-CS}, and \texttt{Coauthor-Physics}), the aggregated neighborhood features play the most important role to the node classification accuracy. This phenomenon echoes what we observe in \texttt{syn-cora}.
(2) For graphs with low to medium homophily (i.e., \texttt{arXiv-year}, \texttt{penn94}, and \texttt{twitch-gamer}), the graph structure features take the biggest proportion compared to the other two features, indicating the strong impact of this feature. It is consistent with our ablation study where removing the graph structure features causes the most severe accuracy drops.
(3) The graph \texttt{genius} almost totally relies on ego-node features. We suspect that the reason is that its node features are sufficient to serve the node classification task. 
As shown in Table \ref{tab:real-results}, applying MLP on the raw node features already forms a strong baseline for this graph.

\subsection{Benefits of Adding Features to Existing Methods.} \label{sec:AddAnalysis}
We add the three identified features into existing methods, GCN, LINKX, and H2GCN, to show the benefits of these features. Specifically, since GCN only has the aggregated neighborhood feature, the ego node features and the graph structure features are added by two separated branches like in \pjn. For LINKX, we introduce the aggregated neighborhood features since it is an MLP-based method with the other two features already. As for H2GCN, the graph structure features is introduced as an additional branch. Besides, the new features of GCN and H2GCN are adaptively fused with the original features with bi-level optimization. We follow LINKX's design to concatenate different features.

Table \ref{tab:addfeature} shows the average accuracy for the three improved methods as well as the average accuracy improvement compared with its original performance. Although the newly added features are combined with the existing methods just in a trivial way, they demonstrate undeniable positive effects on the test accuracy, i.e., $0.40\% \sim 4.97\%$ improvement on average. The accuracy improvement achieves up to $8.57\%$ on GCN, $18.60\%$ on LINKX, and $2.42\%$ on H2GCN.

\section{Conclusion}\label{sect:conclusion}
This paper studied how to fully exploit the graph original information to improve MPNNs' performance. 
We showed theoretically that integrating the graph structure features with MPNNs can improve their expressive power and integrating ego-node features can depress the node misclassification rate in some cases.
We further proposed a graph neural network \pjn{} that leverages the insights to improve node classification performance.
Extensive experiments show that \pjn{} achieves state-of-the-art accuracy compared with strong baselines on both synthetic and real datasets. 

\small
\bibliographystyle{icml2023}
\bibliography{sample}

\begin{thebibliography}{48}
\providecommand{\natexlab}[1]{#1}
\providecommand{\url}[1]{\texttt{#1}}
\expandafter\ifx\csname urlstyle\endcsname\relax
  \providecommand{\doi}[1]{doi: #1}\else
  \providecommand{\doi}{doi: \begingroup \urlstyle{rm}\Url}\fi

\bibitem[Abboud et~al.(2020)Abboud, Ceylan, Grohe, and
  Lukasiewicz]{abboud2020surprising}
Abboud, R., Ceylan, I.~I., Grohe, M., and Lukasiewicz, T.
\newblock The surprising power of graph neural networks with random node
  initialization.
\newblock \emph{arXiv preprint arXiv:2010.01179}, 2020.

\bibitem[Abu-El-Haija et~al.(2019)Abu-El-Haija, Perozzi, Kapoor, Alipourfard,
  Lerman, Harutyunyan, Ver~Steeg, and Galstyan]{abu2019mixhop}
Abu-El-Haija, S., Perozzi, B., Kapoor, A., Alipourfard, N., Lerman, K.,
  Harutyunyan, H., Ver~Steeg, G., and Galstyan, A.
\newblock Mixhop: Higher-order graph convolutional architectures via sparsified
  neighborhood mixing.
\newblock In \emph{international conference on machine learning}, pp.\  21--29.
  PMLR, 2019.

\bibitem[Arvind et~al.(2020)Arvind, Fuhlbr{\"u}ck, K{\"o}bler, and
  Verbitsky]{arvind2020weisfeiler}
Arvind, V., Fuhlbr{\"u}ck, F., K{\"o}bler, J., and Verbitsky, O.
\newblock On weisfeiler-leman invariance: Subgraph counts and related graph
  properties.
\newblock \emph{Journal of Computer and System Sciences}, 113:\penalty0 42--59,
  2020.

\bibitem[Azizian \& Lelarge(2020)Azizian and Lelarge]{azizian2020expressive}
Azizian, W. and Lelarge, M.
\newblock Expressive power of invariant and equivariant graph neural networks.
\newblock \emph{arXiv preprint arXiv:2006.15646}, 2020.

\bibitem[Battaglia et~al.(2018)Battaglia, Hamrick, Bapst, Sanchez-Gonzalez,
  Zambaldi, Malinowski, Tacchetti, Raposo, Santoro, Faulkner,
  et~al.]{battaglia2018relational}
Battaglia, P.~W., Hamrick, J.~B., Bapst, V., Sanchez-Gonzalez, A., Zambaldi,
  V., Malinowski, M., Tacchetti, A., Raposo, D., Santoro, A., Faulkner, R.,
  et~al.
\newblock Relational inductive biases, deep learning, and graph networks.
\newblock \emph{arXiv preprint arXiv:1806.01261}, 2018.

\bibitem[Bouritsas et~al.(2022)Bouritsas, Frasca, Zafeiriou, and
  Bronstein]{bouritsas2022improving}
Bouritsas, G., Frasca, F., Zafeiriou, S.~P., and Bronstein, M.
\newblock Improving graph neural network expressivity via subgraph isomorphism
  counting.
\newblock \emph{IEEE Transactions on Pattern Analysis and Machine
  Intelligence}, 2022.

\bibitem[Chen et~al.(2019)Chen, Villar, Chen, and Bruna]{chen2019equivalence}
Chen, Z., Villar, S., Chen, L., and Bruna, J.
\newblock On the equivalence between graph isomorphism testing and function
  approximation with gnns.
\newblock \emph{Advances in neural information processing systems}, 32, 2019.

\bibitem[Chen et~al.(2020)Chen, Chen, Villar, and Bruna]{chen2020can}
Chen, Z., Chen, L., Villar, S., and Bruna, J.
\newblock Can graph neural networks count substructures?
\newblock \emph{Advances in neural information processing systems},
  33:\penalty0 10383--10395, 2020.

\bibitem[Chien et~al.(2020)Chien, Peng, Li, and Milenkovic]{chien2020adaptive}
Chien, E., Peng, J., Li, P., and Milenkovic, O.
\newblock Adaptive universal generalized pagerank graph neural network.
\newblock \emph{arXiv preprint arXiv:2006.07988}, 2020.

\bibitem[Dong \& Yang(2019)Dong and Yang]{dong2019searching}
Dong, X. and Yang, Y.
\newblock Searching for a robust neural architecture in four gpu hours.
\newblock In \emph{Proceedings of the IEEE/CVF Conference on Computer Vision
  and Pattern Recognition}, pp.\  1761--1770, 2019.

\bibitem[Dong et~al.(2021)Dong, Ding, Jalaian, Ji, and Li]{dong2021adagnn}
Dong, Y., Ding, K., Jalaian, B., Ji, S., and Li, J.
\newblock Adagnn: Graph neural networks with adaptive frequency response
  filter.
\newblock In \emph{Proceedings of the 30th ACM International Conference on
  Information \& Knowledge Management}, pp.\  392--401, 2021.

\bibitem[Gao et~al.(2018)Gao, Wang, and Ji]{gao2018large}
Gao, H., Wang, Z., and Ji, S.
\newblock Large-scale learnable graph convolutional networks.
\newblock In \emph{Proceedings of the 24th ACM SIGKDD international conference
  on knowledge discovery \& data mining}, pp.\  1416--1424, 2018.

\bibitem[Gilmer et~al.(2017)Gilmer, Schoenholz, Riley, Vinyals, and
  Dahl]{gilmer2017neural}
Gilmer, J., Schoenholz, S.~S., Riley, P.~F., Vinyals, O., and Dahl, G.~E.
\newblock Neural message passing for quantum chemistry.
\newblock In \emph{International conference on machine learning}, pp.\
  1263--1272. PMLR, 2017.

\bibitem[Hamilton et~al.(2017)Hamilton, Ying, and
  Leskovec]{hamilton2017inductive}
Hamilton, W., Ying, Z., and Leskovec, J.
\newblock Inductive representation learning on large graphs.
\newblock \emph{Advances in neural information processing systems}, 30, 2017.

\bibitem[Hamilton(2020)]{hamilton2020graph}
Hamilton, W.~L.
\newblock Graph representation learning.
\newblock \emph{Synthesis Lectures on Artifical Intelligence and Machine
  Learning}, 14\penalty0 (3):\penalty0 1--159, 2020.

\bibitem[He et~al.(2021)He, Wei, Xu, et~al.]{he2021bernnet}
He, M., Wei, Z., Xu, H., et~al.
\newblock Bernnet: Learning arbitrary graph spectral filters via bernstein
  approximation.
\newblock \emph{Advances in Neural Information Processing Systems},
  34:\penalty0 14239--14251, 2021.

\bibitem[Hornik et~al.(1989)Hornik, Stinchcombe, and
  White]{hornik1989multilayer}
Hornik, K., Stinchcombe, M., and White, H.
\newblock Multilayer feedforward networks are universal approximators.
\newblock \emph{Neural networks}, 2\penalty0 (5):\penalty0 359--366, 1989.

\bibitem[Hu et~al.(2020)Hu, Fey, Zitnik, Dong, Ren, Liu, Catasta, and
  Leskovec]{hu2020open}
Hu, W., Fey, M., Zitnik, M., Dong, Y., Ren, H., Liu, B., Catasta, M., and
  Leskovec, J.
\newblock Open graph benchmark: Datasets for machine learning on graphs.
\newblock \emph{Advances in neural information processing systems},
  33:\penalty0 22118--22133, 2020.

\bibitem[Kingma \& Ba(2014)Kingma and Ba]{kingma2014adam}
Kingma, D.~P. and Ba, J.
\newblock Adam: A method for stochastic optimization.
\newblock \emph{arXiv preprint arXiv:1412.6980}, 2014.

\bibitem[Kipf \& Welling(2016)Kipf and Welling]{kipf2016semi}
Kipf, T.~N. and Welling, M.
\newblock Semi-supervised classification with graph convolutional networks.
\newblock \emph{arXiv preprint arXiv:1609.02907}, 2016.

\bibitem[Klicpera et~al.(2018)Klicpera, Bojchevski, and
  G{\"u}nnemann]{klicpera2018predict}
Klicpera, J., Bojchevski, A., and G{\"u}nnemann, S.
\newblock Predict then propagate: Graph neural networks meet personalized
  pagerank.
\newblock \emph{arXiv preprint arXiv:1810.05997}, 2018.

\bibitem[Li et~al.(2020)Li, Wang, Wang, and Leskovec]{li2020distance}
Li, P., Wang, Y., Wang, H., and Leskovec, J.
\newblock Distance encoding: Design provably more powerful neural networks for
  graph representation learning.
\newblock \emph{Advances in Neural Information Processing Systems},
  33:\penalty0 4465--4478, 2020.

\bibitem[Li et~al.(2022)Li, Zhu, Cheng, Shan, Luo, Li, and Qian]{li2022finding}
Li, X., Zhu, R., Cheng, Y., Shan, C., Luo, S., Li, D., and Qian, W.
\newblock Finding global homophily in graph neural networks when meeting
  heterophily.
\newblock \emph{International Conference on Machine Learning}, 2022.

\bibitem[Lim \& Benson(2021)Lim and Benson]{lim2021expertise}
Lim, D. and Benson, A.~R.
\newblock Expertise and dynamics within crowdsourced musical knowledge
  curation: A case study of the genius platform.
\newblock In \emph{ICWSM}, pp.\  373--384, 2021.

\bibitem[Lim et~al.(2021{\natexlab{a}})Lim, Hohne, Li, Huang, Gupta, Bhalerao,
  and Lim]{lim2021large}
Lim, D., Hohne, F., Li, X., Huang, S.~L., Gupta, V., Bhalerao, O., and Lim,
  S.~N.
\newblock Large scale learning on non-homophilous graphs: New benchmarks and
  strong simple methods.
\newblock \emph{Advances in Neural Information Processing Systems},
  34:\penalty0 20887--20902, 2021{\natexlab{a}}.

\bibitem[Lim et~al.(2021{\natexlab{b}})Lim, Li, Hohne, and Lim]{lim2021new}
Lim, D., Li, X., Hohne, F., and Lim, S.-N.
\newblock New benchmarks for learning on non-homophilous graphs.
\newblock \emph{arXiv preprint arXiv:2104.01404}, 2021{\natexlab{b}}.

\bibitem[Liu et~al.(2018)Liu, Simonyan, and Yang]{liu2018darts}
Liu, H., Simonyan, K., and Yang, Y.
\newblock Darts: Differentiable architecture search.
\newblock \emph{arXiv preprint arXiv:1806.09055}, 2018.

\bibitem[Liu et~al.(2020)Liu, Gao, and Ji]{liu2020towards}
Liu, M., Gao, H., and Ji, S.
\newblock Towards deeper graph neural networks.
\newblock In \emph{Proceedings of the 26th ACM SIGKDD international conference
  on knowledge discovery \& data mining}, pp.\  338--348, 2020.

\bibitem[Luan et~al.(2022)Luan, Hua, Lu, Zhu, Zhao, Zhang, Chang, and
  Precup]{luan2022revisiting}
Luan, S., Hua, C., Lu, Q., Zhu, J., Zhao, M., Zhang, S., Chang, X.-W., and
  Precup, D.
\newblock Revisiting heterophily for graph neural networks.
\newblock \emph{Advances in Neural Information Processing Systems}, 2022.

\bibitem[Maron et~al.(2018)Maron, Ben-Hamu, Shamir, and
  Lipman]{maron2018invariant}
Maron, H., Ben-Hamu, H., Shamir, N., and Lipman, Y.
\newblock Invariant and equivariant graph networks.
\newblock \emph{arXiv preprint arXiv:1812.09902}, 2018.

\bibitem[Maron et~al.(2019)Maron, Ben-Hamu, Serviansky, and
  Lipman]{maron2019provably}
Maron, H., Ben-Hamu, H., Serviansky, H., and Lipman, Y.
\newblock Provably powerful graph networks.
\newblock \emph{Advances in neural information processing systems}, 32, 2019.

\bibitem[Monti et~al.(2017)Monti, Boscaini, Masci, Rodola, Svoboda, and
  Bronstein]{monti2017geometric}
Monti, F., Boscaini, D., Masci, J., Rodola, E., Svoboda, J., and Bronstein,
  M.~M.
\newblock Geometric deep learning on graphs and manifolds using mixture model
  cnns.
\newblock In \emph{Proceedings of the IEEE conference on computer vision and
  pattern recognition}, pp.\  5115--5124, 2017.

\bibitem[Morris et~al.(2019)Morris, Ritzert, Fey, Hamilton, Lenssen, Rattan,
  and Grohe]{morris2019weisfeiler}
Morris, C., Ritzert, M., Fey, M., Hamilton, W.~L., Lenssen, J.~E., Rattan, G.,
  and Grohe, M.
\newblock Weisfeiler and leman go neural: Higher-order graph neural networks.
\newblock In \emph{Proceedings of the AAAI conference on artificial
  intelligence}, volume~33, pp.\  4602--4609, 2019.

\bibitem[Niepert et~al.(2016)Niepert, Ahmed, and Kutzkov]{niepert2016learning}
Niepert, M., Ahmed, M., and Kutzkov, K.
\newblock Learning convolutional neural networks for graphs.
\newblock In \emph{International conference on machine learning}, pp.\
  2014--2023. PMLR, 2016.

\bibitem[Rozemberczki \& Sarkar(2021)Rozemberczki and
  Sarkar]{rozemberczki2021twitch}
Rozemberczki, B. and Sarkar, R.
\newblock Twitch gamers: a dataset for evaluating proximity preserving and
  structural role-based node embeddings.
\newblock \emph{arXiv preprint arXiv:2101.03091}, 2021.

\bibitem[Sato(2020)]{sato2020survey}
Sato, R.
\newblock A survey on the expressive power of graph neural networks.
\newblock \emph{arXiv preprint arXiv:2003.04078}, 2020.

\bibitem[Sato et~al.(2021)Sato, Yamada, and Kashima]{sato2021random}
Sato, R., Yamada, M., and Kashima, H.
\newblock Random features strengthen graph neural networks.
\newblock In \emph{Proceedings of the 2021 SIAM International Conference on
  Data Mining (SDM)}, pp.\  333--341. SIAM, 2021.

\bibitem[Sen et~al.(2008)Sen, Namata, Bilgic, Getoor, Galligher, and
  Eliassi-Rad]{sen2008collective}
Sen, P., Namata, G., Bilgic, M., Getoor, L., Galligher, B., and Eliassi-Rad, T.
\newblock Collective classification in network data.
\newblock \emph{AI magazine}, 29\penalty0 (3):\penalty0 93--93, 2008.

\bibitem[Shchur et~al.(2018)Shchur, Mumme, Bojchevski, and
  G{\"u}nnemann]{shchur2018pitfalls}
Shchur, O., Mumme, M., Bojchevski, A., and G{\"u}nnemann, S.
\newblock Pitfalls of graph neural network evaluation.
\newblock \emph{arXiv preprint arXiv:1811.05868}, 2018.

\bibitem[Traud et~al.(2012)Traud, Mucha, and Porter]{traud2012social}
Traud, A.~L., Mucha, P.~J., and Porter, M.~A.
\newblock Social structure of facebook networks.
\newblock \emph{Physica A: Statistical Mechanics and its Applications},
  391\penalty0 (16):\penalty0 4165--4180, 2012.

\bibitem[Vaswani et~al.(2017)Vaswani, Shazeer, Parmar, Uszkoreit, Jones, Gomez,
  Kaiser, and Polosukhin]{vaswani2017attention}
Vaswani, A., Shazeer, N., Parmar, N., Uszkoreit, J., Jones, L., Gomez, A.~N.,
  Kaiser, {\L}., and Polosukhin, I.
\newblock Attention is all you need.
\newblock \emph{Advances in neural information processing systems}, 30, 2017.

\bibitem[Veli{\v{c}}kovi{\'c} et~al.(2017)Veli{\v{c}}kovi{\'c}, Cucurull,
  Casanova, Romero, Lio, and Bengio]{velivckovic2017graph}
Veli{\v{c}}kovi{\'c}, P., Cucurull, G., Casanova, A., Romero, A., Lio, P., and
  Bengio, Y.
\newblock Graph attention networks.
\newblock \emph{arXiv preprint arXiv:1710.10903}, 2017.

\bibitem[Wang et~al.(2019)Wang, Sun, Liu, Sarma, Bronstein, and
  Solomon]{wang2019dynamic}
Wang, Y., Sun, Y., Liu, Z., Sarma, S.~E., Bronstein, M.~M., and Solomon, J.~M.
\newblock Dynamic graph cnn for learning on point clouds.
\newblock \emph{Acm Transactions On Graphics (tog)}, 38\penalty0 (5):\penalty0
  1--12, 2019.

\bibitem[Wu et~al.(2019)Wu, Souza, Zhang, Fifty, Yu, and
  Weinberger]{wu2019simplifying}
Wu, F., Souza, A., Zhang, T., Fifty, C., Yu, T., and Weinberger, K.
\newblock Simplifying graph convolutional networks.
\newblock In \emph{International conference on machine learning}, pp.\
  6861--6871. PMLR, 2019.

\bibitem[Xu et~al.(2018)Xu, Hu, Leskovec, and Jegelka]{xu2018powerful}
Xu, K., Hu, W., Leskovec, J., and Jegelka, S.
\newblock How powerful are graph neural networks?
\newblock \emph{arXiv preprint arXiv:1810.00826}, 2018.

\bibitem[Zeng et~al.(2021)Zeng, Zhang, Xia, Srivastava, Malevich, Kannan,
  Prasanna, Jin, and Chen]{zeng2021decoupling}
Zeng, H., Zhang, M., Xia, Y., Srivastava, A., Malevich, A., Kannan, R.,
  Prasanna, V., Jin, L., and Chen, R.
\newblock Decoupling the depth and scope of graph neural networks.
\newblock \emph{Advances in Neural Information Processing Systems},
  34:\penalty0 19665--19679, 2021.

\bibitem[Zhao et~al.(2021)Zhao, Jin, Akoglu, and Shah]{zhao2021stars}
Zhao, L., Jin, W., Akoglu, L., and Shah, N.
\newblock From stars to subgraphs: Uplifting any gnn with local structure
  awareness.
\newblock \emph{arXiv preprint arXiv:2110.03753}, 2021.

\bibitem[Zhu et~al.(2020)Zhu, Yan, Zhao, Heimann, Akoglu, and
  Koutra]{zhu2020beyond}
Zhu, J., Yan, Y., Zhao, L., Heimann, M., Akoglu, L., and Koutra, D.
\newblock Beyond homophily in graph neural networks: Current limitations and
  effective designs.
\newblock \emph{Advances in Neural Information Processing Systems},
  33:\penalty0 7793--7804, 2020.

\end{thebibliography}



\newpage
\appendix
\onecolumn
\section{Real-world Datasets Details}\label{sect:app-dataset-details}

In our experiments, we use the following real-world datasets to evaluate our method and existing GNNs. The homophily of these datasets ranges from $0.222 \sim 0.931$. Table \ref{tab:dataset-info} summarizes the detailed dataset statistics. 

\texttt{Cora}~\cite{sen2008collective}, \texttt{CiteSeer}~\cite{sen2008collective}, \texttt{PubMed}~\cite{sen2008collective}, and \texttt{Coauthor CS \& Physics}~\cite{shchur2018pitfalls} have high edge homophily and are usually considered as homophilic graphs. For these graphs, we follow the data split in GCN~\cite{kipf2016semi} and DAGNN~\cite{liu2020towards}. For \texttt{Cora}, \texttt{CiteSeer}, and \texttt{PubMed}, we randomly sample 20 nodes from each class as the train set, and sample 500 nodes from the rest as the validation set and 1000 nodes as the test set. For \texttt{Coauthor CS \& Physics}, we randomly sample 20 nodes per class as the train set, 30 nodes per class as the validation set, and the rest nodes as the test set.

\texttt{penn94} \cite{traud2012social}, \texttt{arXiv-year} \cite{hu2020open}, \texttt{genius} \cite{lim2021expertise}, and \texttt{twitch-gamer} \cite{rozemberczki2021twitch} are graphs with lower homophily. For these graphs, we follow the data split in LINKX \cite{lim2021new,lim2021large}, which uses the $50\%$/$25\%$/$25\%$ nodes as the train/validation/test set respectively.

For all the datasets, we generate 5 random data splits for computing the average and standard deviation of the models' performance.

\begin{table}[htb]
\centering
\caption{Statistics for the real-world datasets.}
\label{tab:dataset-info}
\vskip 0.1in
\small
\tabcolsep=0.02cm
\begin{tabular}{c|ccccccccccc}
\toprule
                                                            & \#Classes & \#Nodes & \#Edges   & \#Features & degree & \begin{tabular}[c]{@{}c@{}}edge\\ homophily\end{tabular} & Nodes        & Edges       & Classes \\
\midrule
\midrule
Cora                                                        & 7         & 2,708   & 5,278     & 1,433      & 1.949  & 0.81                                                     & papers       & citation    & research field        \\
CiteSeer                                                    & 6         & 3,327   & 4,552     & 3,703      & 1.368  & 0.736                                                    & papers       & citation    & research field        \\
PubMed                                                      & 3         & 19,717  & 44,324    & 500        & 2.248  & 0.802                                                    & papers       & citation    & research field        \\
\begin{tabular}[c]{@{}c@{}}Coauthor CS\end{tabular}       & 15        & 18,333  & 81,894    & 6,805      & 4.467  & 0.808                                                    & authors      & co-authors  & research field        \\
\begin{tabular}[c]{@{}c@{}}Coauthor Physics\end{tabular} & 5         & 34,493  & 247,962   & 8,415      & 7.189  & 0.931                                                    & authors      & co-authors  & research field        \\
\midrule
penn94                                                      & 2         & 41,554  & 1,362,229 & 5          & 32.782 & 0.47                                                     & peoples      & friends     & Gender  \\
\begin{tabular}[c]{@{}c@{}}arXiv-year\end{tabular}       & 5         & 169,343 & 1,166,243 & 128        & 6.887  & 0.222                                                    & papers       & citation    & year    \\
genius                                                      & 2         & 421,961 & 984,979   & 12         & 2.334  & 0.618                                                    & users        & followers   & Gender  \\
\begin{tabular}[c]{@{}c@{}}twitch-gamer\end{tabular}    & 2         & 168,114 & 6,797,557 & 7          & 40.434 & 0.545                                                    & Twitch users & followers   & Gender \\
\bottomrule
\end{tabular}
\end{table}

\section{Implementation Details}
\subsection{Model Implementation Details}
The implementation of \pjn{} basically follows the model we described in Section \ref{sec:ModelDesign}, in which we omit some details for simplicity. The complete implementation details of \pjn{} are listed in Table \ref{tab:impl}. Before extracting the ego-node feature, the raw input node features are fed into a Dropout layer. In the aggregated neighborhood features extraction, we reuse the intermediate results of different hops of neighbors to simplify the computation. For example, when computing $\hat{\mathbf{A}}^{i} \mathbf{H}_\text{ego}$, we use the dot product of $\hat{\mathbf{A}} \cdot (\hat{\mathbf{A}}^{i-1} \mathbf{H}_\text{ego})$ instead of computing the power of $\hat{\mathbf{A}}$. Moreover, because $\hat{\mathbf{A}}^{i-1} \mathbf{H}_\text{ego}$ has the same shape of $\mathbf{H}_\text{ego}$, computing $\hat{\mathbf{A}} \cdot (\hat{\mathbf{A}}^{i-1} \mathbf{H}_\text{ego})$ is always a sparse-dense matrix multiplication, which is more efficient than computing the power of $\hat{\mathbf{A}}$. Similarly, we use the same strategy to compute $\mathbf{H}_\text{strc}$. Because we use the original adjacency matrix instead of a normalized one in structure features extraction (for accuracy performance purposes), the magnitude of $\mathbf{A}^j$ will grow exponentially with $j$ increasing, which may cause the value out of range problem. Therefore, we utilize Batch Normalization layers to scale down the feature matrix after each adjacency matrix multiplication. In the feature fusion module, we dropout the fused features before activating it with ReLU.

\begin{table}[htb]
\centering
\caption{OGNN Model Implementation details}
\label{tab:impl}
\vskip 0.1in
\small
\tabcolsep=0.08cm
\begin{tabular}{l|l}
\toprule
Module & Implementation Details \\
\midrule
\midrule

Input Feature & $\mathbf{X}$\\
\midrule
Ego Node features & $\mathbf{H}_\text{ego} = \text{Linear}(\text{Dropout}(\mathbf{X}))$\\
Aggregated Neighborhood features & $\mathbf{H}_\text{agg} = \sum_{i=1}^{s_1} \hat{\mathbf{A}}^{i} \mathbf{H}_\text{ego}$\\
Graph Structure features  &$\mathbf{H}_\text{strc} = \sum_{j=1}^{s_2} \text{BN}^{j}(\mathbf{A}) \mathbf{W}_\text{strc}$,  $\text{BN}^{j}(\mathbf{A})=\underbrace{\text{BN}(\mathbf{A} \cdot \text{BN}(\dots \text{BN}  (\mathbf{A})))}_{s_2}$\\
\midrule
Feature Fusion & $\mathbf{H} = \text{ReLU}(\text{Dropout}(\pi_1 \mathbf{H}_\text{ego} + \pi_2 \mathbf{H}_\text{agg} + \pi_3 \mathbf{H}_\text{strc})), \quad \pi_i = \frac{\exp{p_i}}{\sum_{j=1}^{3} \exp{p_j}}$\\
\midrule
Prediction Head &$\mathbf{Y}_\text{pred} = \text{Softmax}(\mathbf{H} \mathbf{W}_\text{pred})$\\
\bottomrule
\end{tabular}
\end{table}

\subsection{Training Details}
To implement our bi-level optimization training scheme, we utilize two optimizers to train the model parameters $\mathbf{W}$ and the feature fusion parameters $\mathcal{P}$ respectively. The model parameters $\mathbf{W}$ are trained with an Adam optimizer \cite{kingma2014adam} $\mathcal{O}_1$ on the training dataset. The learning rate and weight decay rate of $\mathcal{O}_1$ is decided by the hyper-parameter settings, which is described in Section \ref{sec:hps}. On the other hand, the feature fusion parameters $\mathcal{P}$ are trained with another Adam optimizer $\mathcal{O}_2$ on the validation dataset with a fixed learning rate $0.01$. We train $\mathcal{P}$ for 10 epochs after training $\mathbf{W}$ for every 20 epochs. When training $\mathcal{P}$, we'll set the Dropout layers and Batch Normalization layers to evaluation mode.
We early stop the model if the validation accuracy does not increase for 100 epochs or the total number of training epochs reaches 3000.

\section{Hyper-parameter Settings} \label{sec:hps}
\label{app:hyperparameter}
In our experiments, we use the hidden channels ($d$), the propagation steps for aggregated neighborhood features extraction ($s_1$), the power of the adjacency matrix for graph structure features extraction ($s_2$), the learning rate $\eta$, the weight decay $\lambda$, and the feature normalization ($\nu$) as the hyper-parameters. For all the datasets, we perform grid search over the following hyper-parameter options:
\begin{align*} \label{eq:mplayer}
  d &\in \{64 \quad 128 \} \\
  s_1 &\in \{2 \quad 5 \quad 10 \quad 20 \} \\
  s_2 &\in \{1 \quad 2 \quad 5 \} \\
  \eta &\in \{0.01 \quad 0.001 \} \\
  \lambda &\in \{0.001 \quad 0.0005 \} \\
  \nu &\in \{\text{True} \quad \text{False} \}
\end{align*}

We also list the best hyper-parameter settings for all the real-world datasets in Table \ref{tab:hp}

\begin{table}[htb]
\centering
\caption{Best hyper-parameter settings for the real-world datasets.}
\label{tab:hp}
\vskip 0.1in
\small
\tabcolsep=0.08cm
\begin{tabular}{c|ccccccccc}
\toprule
             &\begin{tabular}[c]{@{}l@{}}arXiv\end{tabular}  & penn94 & 
             \begin{tabular}[c]{@{}l@{}}twitch\end{tabular} & genius & CiteSeer & PubMed & Cora  & \begin{tabular}[c]{@{}l@{}} CS\end{tabular} & \begin{tabular}[c]{@{}l@{}}  Physics\end{tabular} \\
Homophily    & 0.22  & 0.47   & 0.55    & 0.62  & 0.74    & 0.80  & 0.81  & 0.81    & 0.93  \\
\midrule
\midrule
$d$       & 64     & 64     & 128    & 128   & 128   & 128    & 128    & 64    & 128   \\
$s_1$     & 2      & 5      & 10     & 2     & 20    & 20     & 20     & 10    & 20    \\
$s_2$     & 2      & 2      & 2      & 5     & 2     & 5      & 5      & 5     & 2     \\
$\eta$    & 0.01   & 0.01   & 0.01   & 0.01  & 0.01  & 0.01   & 0.01   & 0.01  & 0.01  \\
$\lambda$ & 0.0005 & 0.0005 & 0.0005 & 0.001 & 0.001 & 0.0005 & 0.0005 & 0.001 & 0.0005\\
$\nu$     & False  & False  & False  & False & True  & True   & True   & False & False \\
\bottomrule
\end{tabular}
\end{table}

\section{Detailed Results} \label{sect:app-synthetic-details}

\subsection{Detailed Syntheic Dataset Results}

We list the detailed results of the experiments on the synthetic datasets in Table \ref{tab:syn-cora-results1} and \ref{tab:syn-cora-results2}, which includes standard deviation of the accuracy performance compared to Table \ref{tab:syn-cora-results}. Overall, \pjn{} outperforms the existing methods on most homophily settings with seven settings at top-1 and four at top-2, and pushes the best accuracy boundary for up to $2.76\%$.

\begin{table}[ht]
\centering
\caption{Test accuracy of different methods on the graphs with different homophily from 0 to 0.5 in \texttt{syn-cora} dataset. {\color{red} \textbf{Red}} and {\color{blue} blue} represent top-1 and top-2 ranking in terms of accuracy respectively.}
\label{tab:syn-cora-results1}
\vskip 0.1in
\small
\tabcolsep=0.1cm
\begin{tabular}{c|cccccc}
\toprule
synh   & 0                                                                                       & 0.1                                                                                     & 0.2                                                                                     & 0.3                                                                                     & 0.4                                                                                     & 0.5                                                                                                                                                                           \\
\midrule
\midrule
MLP	& $69.20_{\pm 1.92}$	  & $69.20_{\pm 1.92}$	 & $69.20_{\pm 1.92}$	 & $69.20_{\pm 1.92}$	 & $69.20_{\pm1.92}$	  & $69.20_{\pm 1.92}$		 \\
GCN	& $28.61_{\pm 2.38}$	 & $30.64_{\pm 1.58}$	 & $36.03_{\pm 1.15}$	 & $45.15_{\pm 2.18}$	 & $51.39_{\pm1.48}$	  & $65.04_{\pm 1.60}$	 	 \\
GAT	& $29.41_{\pm 2.20}$	 & $30.32_{\pm 2.52}$	 & $34.83_{\pm 1.67}$	 & $43.86_{\pm 1.44}$	 & $51.15_{\pm 1.25}$	 & $64.80_{\pm 1.81}$	  \\
DAGNN  & $34.32_{\pm 1.51}$	 & $39.49_{\pm 1.18}$	 & $45.01_{\pm 2.42}$	 & $54.48_{\pm 3.17}$	 & $60.51_{\pm 1.40}$	 & $72.36_{\pm 1.83}$	  \\
LINKX  & $72.09_{\pm 1.65}$	 & $70.54_{\pm 1.84}$	 & $69.76_{\pm 1.07}$	 & $70.13_{\pm 1.30}$	 & $71.34_{\pm 1.17}$	 & $74.53_{\pm 1.83}$	 	 \\
H2GCN  & {\color[HTML]{FF0000} $\mathbf{76.43_{\pm 1.27}}$} & {\color[HTML]{FF0000} $\mathbf{73.86_{\pm 3.05}}$} & {\color[HTML]{0000FF} $71.58_{\pm 1.75}$}	  & {\color[HTML]{0000FF} $72.17_{\pm 1.44}$}	  & {\color[HTML]{0000FF} $72.95_{\pm 0.76}$}	& {\color[HTML]{0000FF} $78.31_{\pm 1.96}$} 	 \\
MIXHOP & $39.44_{\pm 2.98}$	 & $38.95_{\pm 2.21}$	 & $41.05_{\pm 2.69}$	 & $48.93_{\pm 2.72}$	 & $55.09_{\pm 2.30}$	 & $64.75_{\pm 1.88}$	  \\
GPR-GNN & $67.86_{\pm 2.66}$	 & $61.96_{\pm 2.53}$	 & $61.21_{\pm 2.36}$	 & $64.69_{\pm 2.88}$	 & $67.67_{\pm 3.41}$	 & $74.56_{\pm 2.60}$	  \\
\pjn{}   & {\color[HTML]{0000FF} $72.49_{\pm 1.74}$}	  & {\color[HTML]{0000FF} $73.67_{\pm 1.61}$}	  & {\color[HTML]{FF0000} $\mathbf{73.11_{\pm 0.94}}$} & {\color[HTML]{FF0000} $\mathbf{74.32_{\pm 2.29}}$} & {\color[HTML]{FF0000} $\mathbf{75.71_{\pm 1.00}}$} & {\color[HTML]{FF0000} $\mathbf{79.49_{\pm 1.58}}$} 	 \\
\bottomrule
\end{tabular}
\end{table}

\begin{table}[ht]
\centering
\caption{Test accuracy of different methods on the graphs with different homophily from 0.6 to 1 in \texttt{syn-cora} dataset. {\color{red} \textbf{Red}} and {\color{blue} blue} represent top-1 and top-2 ranking in terms of accuracy respectively.}
\label{tab:syn-cora-results2}
\vskip 0.1in
\small
\tabcolsep=0.1cm
\begin{tabular}{c|ccccc}
\toprule
synh    & 0.6                                                                                     & 0.7                                                                                     & 0.8                                                                                     & 0.9                                                                                     & 1                                                                                       \\
\midrule
\midrule
MLP	& $69.20_{\pm 1.92}$	 & $69.20_{\pm 1.92}$	 & $69.20_{\pm 1.92}$	 & $69.20_{\pm 1.92}$	 & $69.20_{\pm 1.92}$	 \\
GCN	& $74.48_{\pm 1.36}$	 & $82.22_{\pm 2.20}$	 & $91.21_{\pm 1.15}$	 & $96.19_{\pm 0.65}$	 & $99.92_{\pm 0.18}$	 \\
GAT	& $74.34_{\pm 2.03}$	 & $81.45_{\pm 1.72}$	 & $90.46_{\pm 1.12}$	 & $95.79_{\pm 0.71}$	 & {\color[HTML]{FF0000} $\mathbf{100.0_{\pm 0.00}}$} \\
DAGNN  & $80.00_{\pm 1.26}$	 & $86.49_{\pm 2.13}$	 & $93.32_{\pm 1.45}$	 & {\color[HTML]{FF0000} $\mathbf{97.48_{\pm 0.31}}$} & {\color[HTML]{0000FF} $99.95_{\pm 0.07}$}	  \\
LINKX  & $77.16_{\pm 1.22}$	 & $80.35_{\pm 1.47}$	 & $83.30_{\pm 1.23}$	 & $87.59_{\pm 1.07}$	 & $89.60_{\pm 1.86}$	 \\
H2GCN  & {\color[HTML]{0000FF} $83.27_{\pm 1.25}$}	  & {\color[HTML]{FF0000} $\mathbf{87.43_{\pm 0.88}}$} & $92.09_{\pm 0.46}$	 & $97.00_{\pm 0.28}$	 & $98.98_{\pm 0.62}$	 \\
MIXHOP & $74.45_{\pm 1.44}$	 & $82.44_{\pm 2.69}$	 & $91.45_{\pm 1.11}$	 & $96.25_{\pm 0.89}$	 & {\color[HTML]{FF0000} $\mathbf{100.0_{\pm 0.00}}$} \\
GPR-GNN & $80.19_{\pm 1.57}$	 & $86.68_{\pm 0.69}$	 & {\color[HTML]{0000FF} $93.54_{\pm 1.36}$}	  & {\color[HTML]{0000FF} $97.45_{\pm 0.28}$}	  & {\color[HTML]{FF0000} $\mathbf{100.0_{\pm 0.00}}$} \\
\pjn{}   & {\color[HTML]{FF0000} $\mathbf{84.67_{\pm 1.30}}$} & {\color[HTML]{0000FF} $87.32_{\pm 1.25}$}	  & {\color[HTML]{FF0000} $\mathbf{93.70_{\pm 1.13}}$} & $94.75_{\pm 3.47}$	 & {\color[HTML]{0000FF} $99.95_{\pm 0.07}$}	 \\
\bottomrule
\end{tabular}
\end{table}

\subsection{Using Concatenation as Feature Fusion} \label{sec:app_ablation_concat}

Table \ref{tab:ablation_concat} shows the experimental results of using concatenation as feature fusion approach instead of summing up the features.

\begin{table}[ht]
\centering
\caption{Using concatenation as feature fusion.}
\label{tab:ablation_concat}
\vskip 0.1in
\scriptsize
\tabcolsep=0.08cm
\begin{tabular}{l|lllllllll|c}
\toprule
             &\begin{tabular}[c]{@{}l@{}}arXiv\end{tabular}  & penn94 & 
             \begin{tabular}[c]{@{}l@{}}twitch\end{tabular} & genius & CiteSeer & PubMed & Cora  & \begin{tabular}[c]{@{}l@{}} CS\end{tabular} & \begin{tabular}[c]{@{}l@{}}  Physics\end{tabular} & $\Delta$ Avg. \\
Homophily    & 0.22       & 0.47   & 0.55        & 0.62  & 0.74    & 0.80  & 0.81  & 0.81                                                   & 0.93                                                      & -          \\
\midrule
\midrule
\textbf{\pjn{}} & \textbf{56.53} & \textbf{85.44} & \textbf{66.07} & \textbf{89.72} & \textbf{69.82} & \textbf{79.98} & \textbf{83.50} & \textbf{93.15}   & \textbf{93.56}                                               & -          \\
w/ sum   & 53.45  \textsubscript{3.08\textdownarrow}    & 84.35 \textsubscript{1.09\textdownarrow} & 65.76  \textsubscript{0.32\textdownarrow}     & 87.64 \textsubscript{2.08\textdownarrow} & 67.72 \textsubscript{2.10\textdownarrow}  & 76.04 \textsubscript{3.94\textdownarrow} & 82.48 \textsubscript{1.02\textdownarrow} & 91.13  \textsubscript{2.02\textdownarrow}                                                 & 92.64   \textsubscript{0.92\textdownarrow}                                                     & 1.84\textdownarrow       \\
w/ concate & 56.55  \textsubscript{0.02\textuparrow}    & 82.14 \textsubscript{3.30\textdownarrow} & 65.87  \textsubscript{0.20\textdownarrow}      & 89.44 \textsubscript{0.28\textdownarrow} & 66.70 \textsubscript{3.12\textdownarrow}   & 74.70 \textsubscript{5.28\textdownarrow} & 81.10 \textsubscript{2.40\textdownarrow} & 89.73 \textsubscript{3.42\textdownarrow}                                                  & 92.74 \textsubscript{0.82\textdownarrow}                                                       & 2.09\textdownarrow      \\
\bottomrule
\end{tabular}
\end{table}

\end{document}